\def\eqref#1{equation~\ref{#1}}
\def\1{\bm{1}}
\DeclareMathAlphabet{\mathsfit}{\encodingdefault}{\sfdefault}{m}{sl}
\SetMathAlphabet{\mathsfit}{bold}{\encodingdefault}{\sfdefault}{bx}{n}
\newtheorem{theorem}{Theorem}
\newtheorem{proposition}{Proposition}
\newcommand{\vc}[1]{\mathbf{#1}}
\newcommand{\ci}{\perp\!\!\!\perp}
\newcommand{\cl}[1]{\mathcal{#1}}
\newcommand{\hist}[1]{\bar{\vc{#1}}}
\definecolor{mygreen}{RGB}{0, 98, 0}
\title{Estimating counterfactual treatment \\ outcomes over time through adversarially balanced representations}
\author{Ioana Bica \\
Department of Engineering Science\\
University of Oxford, Oxford, UK\\
The Alan Turing Institute, London, UK \\
\texttt{ioana.bica@eng.ox.ac.uk} \\
\And
Ahmed M. Alaa \\
Department of Electrical Engineering \\
University of California, Los Angeles, USA \\
\texttt{ahmedmalaa@ucla.edu} \\
\AND
James Jordon \\
Department of Engineering Science\\
University of Oxford, Oxford, UK\\
\texttt{james.jordon@wolfson.ox.ac.uk} \\
\And
Mihaela van der Schaar \\
University of Cambridge, Cambridge, UK\\
University of California, Los Angeles, USA \\
The Alan Turing Institute, London, UK \\
\texttt{mv472@cam.ac.uk} \\
}
\begin{document}

\maketitle

\begin{abstract}
Identifying when to give treatments to patients and how to select among multiple treatments over time are important medical problems with a few existing solutions. In this paper, we introduce the Counterfactual Recurrent Network (CRN), a novel sequence-to-sequence model that leverages the increasingly available patient observational data to estimate treatment effects over time and answer such medical questions. To handle the bias from time-varying confounders, covariates affecting the treatment assignment policy in the observational data, CRN uses domain adversarial training to build balancing representations of the patient history. At each timestep, CRN constructs a treatment invariant representation which removes the association between patient history and treatment assignments and thus can be reliably used for making counterfactual predictions. On a simulated model of tumour growth, with varying degree of time-dependent confounding, we show how our model achieves lower error in estimating counterfactuals and in choosing the correct treatment and timing of treatment than current state-of-the-art methods.
\end{abstract}

	\section{Introduction}	
	As clinical decision-makers are often faced with the problem of choosing between treatment alternatives for patients, reliably estimating their effects is paramount. While clinical trials represent the gold standard for causal inference, they are expensive, have a few patients and narrow inclusion criteria \citep{booth2014randomised}. Leveraging the increasingly available observational data about patients, such as electronic health records, represents a more viable alternative for estimating treatment effects. 
	
	A large number of methods have been proposed for performing causal inference using observational data in the static setting \citep{johansson2016learning, shalit2017estimating, alaa2017bayesian, li2017matching, yoon2018ganite, alaa2018limits, yao2018representation} and only a few methods address the longitudinal setting \citep{xu2016bayesian, roy2016bayesian, soleimani2017treatment, schulam2017reliable, lim2018forecasting}. However, estimating the effects of treatments over time poses unique opportunities such as understanding how diseases evolve under different treatment plans, how individual patients respond to medication over time, but also which are optimal timings for assigning treatments, thus providing new tools to improve clinical decision support systems. 
		
	\begin{figure}[t]
	\centering
	\includegraphics[width=0.97\columnwidth]{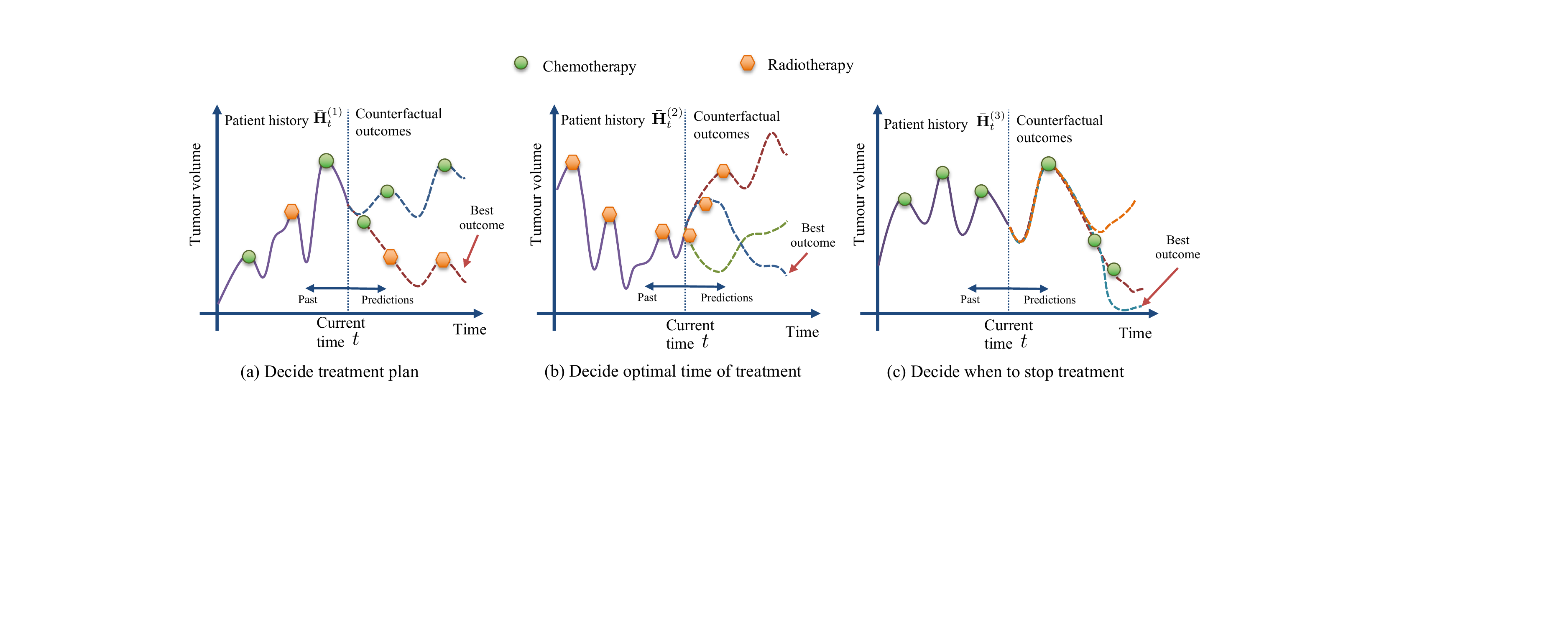}
	\caption{Applicability of CRN in cancer treatment planning. We illustrate 3 patients with different covariate and treatment histories $\hist{H}_t$. For a current time $t$, CRN can predict counterfactual trajectories (the coloured dashed branches) for planned treatments in the future. Through the counterfactual predictions, we can decide which treatment plan results in the best patient outcome (in this case, the lowest tumour volume). This way, CRN can be used to perform all of the following: choose optimal treatments (a), find timing when treatment is most effective (b) decide when to stop treatment (c).}
	\vspace{-0.7cm}
	\label{fig:treatment_scenarios}
\end{figure}

The biggest challenge when estimating the effects of time-dependent treatments from observational data involves correctly handling the time-dependent confounders: patient covariates that are affected by past treatments which then influence future treatments and outcomes \citep{platt2009time}. For instance, consider that treatment A is given when a certain patient covariate (e.g. white blood cell count) has been outside of normal range values for several consecutive timesteps. Suppose also that this patient covariate was itself affected by the past administration of treatment B. If these patients are more likely to die, without adjusting for the time-dependent confounding (e.g. the changes in the white blood cell count over time), we will incorrectly conclude that treatment A is harmful to patients. Moreover, estimating the effect of a different sequence of treatments on the patient outcome would require not only adjusting for the bias at the current step (in treatment A), but also for the bias introduced by the previous application of treatment B. 

Existing methods for causal inference in the static setting cannot be applied in this longitudinal setting since they are designed to handle the cross-sectional set-up, where the treatment and outcome depend only on a static value of the patient covariates. If we consider again the above example, these methods would not be able to model how the changes in patient covariates over time affect the assignment of treatments and they would also not be able to estimate the effect of a sequence of treatments on the patient outcome (e.g. sequential application of treatment A followed by treatment B). Different models that can handle these temporal dependencies in the observational data and varying-length patient histories are needed for estimating treatment effects over time.

Time-dependent confounders are present in observational data because doctors follow policies: the history of the patients' covariates and the patients' response to past treatments are used to decide future treatments \citep{mansournia2012effect}. The direct use of supervised learning methods will be biased by the treatment policies present in the observational data and will not be able to correctly estimate counterfactuals for different treatment assignment policies.

Standard methods for adjusting for time-varying confounding and estimating the effects of time-varying exposures are based on ideas from epidemiology. The most widely used among these are Marginal Structural Models (MSMs) \citep{robins2000marginal, mansournia2012effect} which use the inverse probability of treatment weighting (IPTW) to adjust for the time-dependent confounding bias. Through IPTW, MSMs create a pseudo-population where the probability of treatment does not depend on the time-varying confounders. However, MSMs are not robust to model misspecification in computing the IPTWs. MSMs can also give high-variance estimates due to extreme weights; computing the IPTW involves dividing by probability of assigning a treatment conditional on patient history which can be numerically unstable if the probability is small.

	We introduce the Counterfactual Recurrent Network (CRN), a novel sequence-to-sequence architecture for estimating treatment effects over time. CRN leverages recent advances in representation learning \citep{bengio2012representation} and domain adversarial training \citep{ganin2016domain} to overcome the problems of existing methods for causal inference over time. Our main contributions are as follows.  
	
    \textbf{Treatment invariant representations over time.} CRN constructs treatment invariant representations at each timestep in order to break the association between patient history and treatment assignment and thus removes the bias from time-dependent confounders. For this, CRN uses domain adversarial training \citep{ganin2016domain, li2018deep, sebag2019multi} to trade-off between building this balancing representation and predicting patient outcomes. We show that these representations remove the bias from time-varying confounders and can be reliably used for estimating counterfactual outcomes. This represents the first work that introduces ideas from domain adaptation to the area of estimating treatment effects over time. In addition, by building balancing representations, we propose a novel way of removing the bias introduced by time-varying confounders.  
    
	\textbf{Counterfactual estimation of future outcomes.} To estimate counterfactual outcomes for treatment plans (and not just single treatments), we integrate the domain adversarial training procedure as part of a sequence-to-sequence architecture. CRN consists of an encoder network which builds treatment invariant representations of the patient history that are used to initialize the decoder. The decoder network estimates outcomes under an intended sequence of future treatments, while also updating the balanced representation. By performing counterfactual estimation of future treatment outcomes, CRN can be used to answer critical medical questions such as deciding when to give treatments to patients, when to start and stop treatment regimes, and also how to select from multiple treatments over time. We illustrate in Figure \ref{fig:treatment_scenarios} the applicability of our method in choosing optimal cancer treatments. 
	
	In our experiments, we evaluate CRN in a realistic set-up using a model of tumour growth \citep{geng2017prediction}. We show that CRN achieves better performance in predicting counterfactual outcomes, but also in choosing the right treatment and timing of treatment than current state-of-the-art methods. 

	\section{Related work}
	
    We focus on methods for estimating  treatment effects over time and for building balancing representations for causal inference. A more in-depth review of related work is in Appendix \ref{apx:related_work}. 

	\textbf{Treatment effects over time.} Standard methods for estimating the effects of time-varying exposures were first developed in the epidemiology literature and include the g-computation formula, Structural Nested Models and Marginal Structural Models (MSMs) \citep{robins1986new, robins1994correcting, robins2000marginal, robins2008estimation}. Originally, these methods have used predictors performing logistic/linear regression which makes them unsuitable for handling complex time-dependencies \citep{hernan2001marginal, mansournia2012effect, mortimer2005application}. To address these limitations, methods that use Bayesian non-parametrics or recurrent neural networks as part of these frameworks have been proposed. \citep{xu2016bayesian, roy2016bayesian, lim2018forecasting}. 
	
	To begin with, \cite{xu2016bayesian} use Gaussian processes to model discrete patient outcomes as a generalized mixed-effects model and uses the $g$-computation method to handle time-varying confounders. \citet{soleimani2017treatment} extend the approach in \citet{xu2016bayesian} to the continuous time-setting and model treatment responses using linear time-invariant dynamical systems. \cite{roy2016bayesian} use Dirichlet and Gaussian processes to model the observational data and estimate the IPTW in Marginal Structural Models. \cite{schulam2017reliable} build upon work from \cite{lok2008statistical, arjas2004causal} and use marked point processes and Gaussian processes to learn causal effects in continuous-time data. These Bayesian non-parametric methods make strong assumptions about model structure and consequently cannot handle well heterogeneous treatment effects arising from baseline variables \citep{soleimani2017treatment, schulam2017reliable} and multiple treatment outcomes \citep{xu2016bayesian, schulam2017reliable}. 
	
	 The work most related to ours is the one of \citet{lim2018forecasting} which improves on the standard MSMs by using recurrent neural networks to estimate the inverse probability of treatment weights (IPTWs). \cite{lim2018forecasting} introduces Recurrent Marginal Structural Networks (RMSNs) which also use a sequence-to-sequence deep learning architecture to forecast treatment responses in a similar fashion to our model. However, RMSNs require training additional RNNs to estimate the propensity weights and does not overcome the fundamental problems with IPTWs, such as the high-variance of the weights. Conversely, CRN takes advantage of the recent advances in machine learning, in particular, representation learning to propose a novel way of handling time-varying confounders. 
			
	\textbf{Balancing representations for treatment effect estimation.} Balancing the distribution of control and treated groups has been used for counterfactual estimation in the static setting. The methods proposed in the static setting for balancing representations are based on using discrepancy measures in the representation space between treated and untreated patients, which do not generalize to multiple treatments \citep{johansson2016learning, shalit2017estimating, li2017matching, yao2018representation}. Moreover, due to the sequential assignment of treatments in the longitudinal setting, and due to the change of patient covariates over time according to previous treatments, the methods for the static setting are not directly applicable to the time-varying setting \citep{hernan2000marginal, mansournia2012effect}.

	\section{Problem formulation}
	
	Consider an observational dataset $\cl{D} = \big\{ \{\vc{x}^{(i)}_{t},  \vc{a}^{(i)}_{t}, \vc{y}_{t+1}^{(i)} \}_{t=1}^{T^{(i)}} \cup \{\vc{v}^{(i)} \}  \big\}_{i=1}^N$ consisting of information about $N$ independent patients. For each patient $(i)$, we observe time-dependent covariates $\vc{X}_t^{(i)} \in \cl{X}_t$, treatment received $\vc{A}^{(i)}_t \in \{A_1, \dots A_K\} = \cl{A}$ and outcomes $\vc{Y}^{(i)}_{t+1} \in \mathcal{Y}_{t+1}$ for $T^{(i)}$ discrete timesteps. The patient can also have baseline covariates $\vc{V}^{(i)} \in \cl{V}$ such as gender and genetic information. Note that the outcome $\vc{Y}^{(i)}_{t+1}$ will be part of the observed covariates $\vc{X}^{(i)}_{t+1}$. For simplicity, the patient superscript $(i)$ will be omitted unless explicitly needed.

	We adopt the potential outcomes framework proposed by \citep{neyman1923applications, rubin1978bayesian} and extended by \citep{robins2008estimation} to account for time-varying treatments. Let $\mathbf{Y}[\hist{a}]$ be the potential outcomes, either factual or counterfactual, for each possible course of treatment $\hist{a}$. Let $\bar{\vc{H}}_{t} = (\bar{\vc{X}}_{t}, \bar{\vc{A}}_{t-1}, \vc{V}) $ represent the history of the patient covariates $\bar{\vc{X}}_{t} = (\vc{X}_{1}, \dots, \vc{X}_{t})$,  treatment assignments $\hist{{A}}_{t} = (\vc{A}_{1}, \dots, \vc{A}_t)$ and static features $\vc{V}$. We want to estimate:
	\begin{equation}
	\mathbb{E}(\vc{Y}_{t+\tau}[\hist{a}(t, t+\tau -1 )] | \bar{\vc{H}}_{t}),
	\end{equation} 
	where  $\hist{a}(t, t+\tau-1) = [\vc{a}_{t}, \dots \vc{a}_{{t+\tau -1 }}]$ represents a possible sequence of treatments from timestep $t$ just until before the potential outcome $\vc{Y}_{t+\tau}$ is observed. We make the standard assumptions \citep{robins2000marginal, lim2018forecasting} needed to identify the treatment effects: consistency, positivity and no hidden confounders (sequential strong ignorability). See Appendix \ref{apx:assumptions} for more more details.

		\section{Counterfactual Recurrent Network} \label{sec:crn}
		
		The observational data can be used to train a supervised learning model to forecast: $\mathbb{E}(\vc{Y}_{t+\tau}\mid \hist{A}(t, t+\tau -1 ) =  \hist{a}(t, t+\tau -1 ), \bar{\vc{H}}_{t})$. However, without adjusting for the bias introduced by time-varying confounders, this model cannot be reliably used for making causal predictions \citep{robins2000marginal, robins2008estimation, schulam2017reliable}. The Counterfactual Recurrent Network (CRN) removes this bias through domain adversarial training and estimates the counterfactual outcomes $\mathbb{E}(\vc{Y}_{t+\tau}[\hist{a}(t, t+\tau -1 )] | \bar{\vc{H}}_{t})$,
		for any intended future treatment assignment $\hist{a}(t, t+\tau -1 )$.

		\textbf{Balancing representations.}  The history $\bar{\vc{H}}_{t} = (\bar{\vc{X}}_{t}, \bar{\vc{A}}_{t-1}, \vc{V}) $ of the patient contains the time-varying confounders $\hist{X}_t$ which bias the treatment assignment $\vc{A}_t \in \{ A_{1}, \dots A_{K} \}$  in the observational dataset. Inverse probability of treatment weighting, as performed by MSMs, creates a pseudo-population where the probability of treatment $\vc{A}_t$ does not depend on the time-varying confounders \citep{robins2000marginal}.  In this paper, we propose instead building a representation of the history $\bar{\vc{H}}_{t}$ that is not predictive of the treatment $\vc{A}_t$. This way, we remove the association between history, containing the time-varying confounders $\hist{X}_t$, and current treatment $\vc{A}_t$. \cite{robins1999association} shows that in this case, the estimation of counterfactual treatment outcomes is unbiased. See Appendix \ref{apx:time_dependent_confounding} for details and for an example of a causal graph with time-dependent confounders.

		Let $\Phi$ be the representation function that maps the patient history  $\bar{\vc{H}}_{t}$ to a representation space $ \cl{R}$. To obtain unbiased treatment effects, $\Phi$ needs to construct treatment invariant representations such that $P(\Phi(\hist{H}_t)\mid \vc{A}_t =  A_{1}) = \dots = P( \Phi(\hist{H}_t)\mid \vc{A}_t = A_{K})$. To achieve this and to estimate counterfactual outcomes under a planned sequence of treatments, we integrate the domain adversarial training framework proposed by \citet{ganin2016domain} and extended by \citet{sebag2019multi} to the multi-domain learning setting, into a sequence-to-sequence architecture. In our case, the different treatments at each timestep are considered the different domains. Note that the novelty here comes from the use of domain adversarial training to handle the bias from the time-dependent confounders, rather than the use of sequence-to-sequence models, which have already been applied to forecast treatment responses \citep{lim2018forecasting}. Figure \ref{fig:sequence_prediction} illustrates our model architecture.
		
		 \begin{figure}[t]
			\centering
			\vspace{-0.1cm}
			\includegraphics[width=\columnwidth]{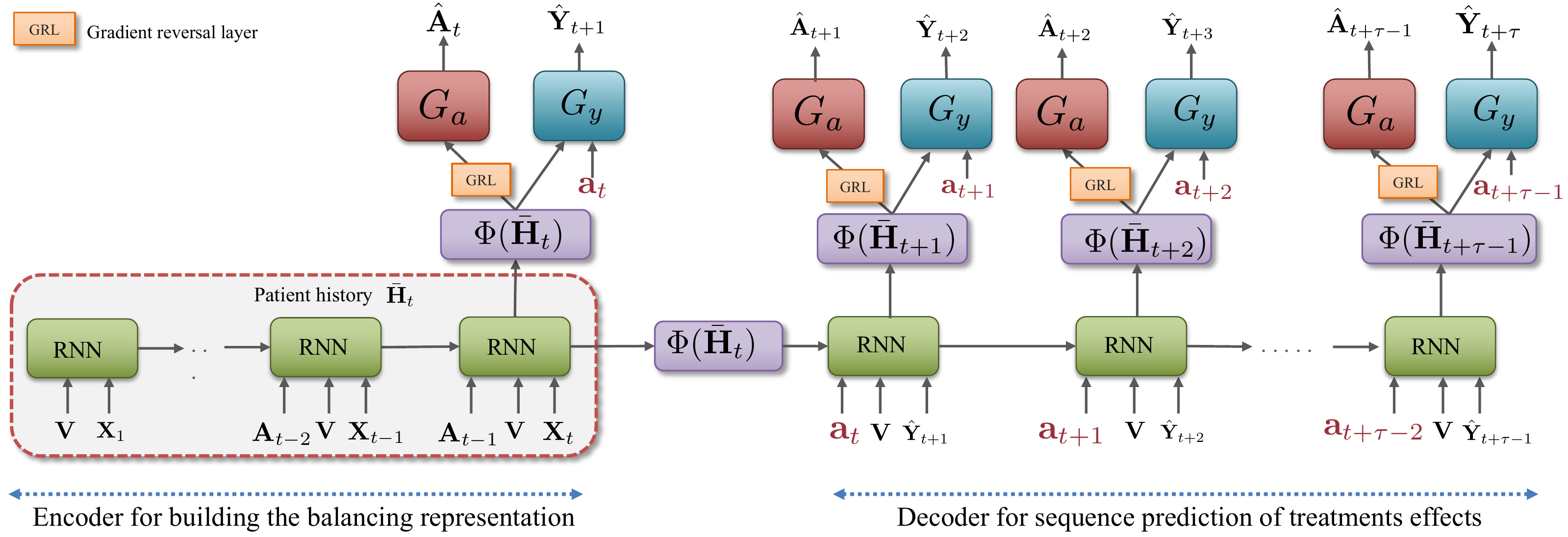}
			\caption{CRN architecture. Encoder builds representation $\Phi(\hist{H}_t)$ that maximizes loss of treatment classifier $G_a$ and minimizes loss of outcome predictor $G_y$. $\Phi(\hist{H}_t)$ is used to initialize the decoder, which continues to update it to predict counterfactual outcomes of a sequence of future treatments.}
			\label{fig:sequence_prediction}
			\vspace{-0.5cm}
		\end{figure}

		\textbf{Encoder.} The encoder network uses an RNN, with LSTM unit \citep{hochreiter1997long}, to process the history of treatments $\hist{A}_{t-1}$, covariates $\hist{X}_t$ and baseline features $\vc{V}$ to build a treatment invariant representation $\Phi(\hist{H}_t)$, but also to predict one-step-ahead outcomes $\vc{Y}_{t+1}$. To achieve this, the encoder network aims to maximize the loss of the treatment classifier $G_a$ and minimize the loss of the outcome predictor network $G_y$. This way, the balanced representation $\Phi(\hist{H}_t)$ is not predictive of the assigned treatment $\vc{A}_t$, but is discriminative enough to estimate the outcome $\vc{Y}_{t+1}$. To train this model using gradient descent, we use the Gradient Reversal Layer \citep{ganin2016domain}. 
		
		\textbf{Decoder.} The decoder network uses the balanced representation computed by the encoder to initialize the state of an RNN that predicts the counterfactual outcomes for a sequence of future treatments. During training, the decoder uses as input the outcomes from the observational data $(\vc{Y}_{t+1}, \dots \vc{Y}_{t+\tau-1})$, the static patient features $\vc{V}$ and the intended sequence of treatments $\hist{a}(t, t+\tau - 1)$. The decoder is trained in a similar way to the encoder to update the balanced representation and to estimate the outcomes. During testing, we do not have access to ground-truth outcomes; thus, the outcomes predicted by the decoder $(\hat{\vc{Y}}_{t+1}, \dots \hat{\vc{Y}}_{t+\tau-1})$ are auto-regressively used instead as inputs. By running the decoder with different treatment settings, and by auto-regressively feeding back the outcomes, we can determine when to start and end different treatments, which is the optimal time to give the treatment and which treatments to give over time to obtain the best patient outcomes. 

		The representation $\Phi(\hist{H}_t)$ is built by applying a fully connected layer, with Exponential Linear Unit (ELU)
		activation to the output of the LSTM. The treatment classifier $G_a$ and the predictor network $G_y$ consist of a hidden layer each, also with ELU activation. The output layer of $G_a$ uses softmax activation, while the output layer of $G_y$ uses linear activation for continuous predictions. For categorical outcomes, softmax activation can be used. We follow an approach similar to \citet{lim2018forecasting} and we split the encoder and decoder training into separate steps. See Appendix \ref{apx:training_procedure} for details.  	  
		
		The encoder and decoder networks use variational dropout \citep{gal2016theoretically} such that the CRN can also give uncertainty intervals for the treatment outcomes. This is particularity important in the estimation of treatment effects, since the model predictions should only be used when they have high confidence.  Our model can also be modified to allow for irregular samplings of observations by using a PhasedLSTM \citep{neil2016phased}.

	\section{Adversarially balanced representation over time} \label{sec:adv_bal_rep}

	At each timestep $t$, let the $K$ different possible treatments $\vc{A}_t \in \{ A_{1}, \dots A_{K} \}$ represent our domains. As described in Section \ref{sec:crn}, to remove the bias from time-dependent confounders, we build a representation of history $\hist{H}_t$ that is invariant across treatments: $P( \Phi(\hist{H}_t)\mid A_{1}) = \dots = P( \Phi(\hist{H}_t)\mid A_{K})$.

	This requirement can be enforced by minimizing the distance in the distribution of $\Phi(\hist{H}_t)$ between any two pairs of treatments. \citet{kifer2004detecting, ben2007analysis}, propose measuring the disparity between distributions based on their separability by a  discriminatively-trained classifier. Let the symmetric hypothesis class $\cl{H}$ consist of the set of symmetric multiclass classifiers, such as neural network architectures. 
	The $\cl{H}$-divergence between all pairs of two distributions is defined in terms of the capacity of the hypothesis class $\cl{H}$ to discriminate between examples from the multiple distributions. Empirically, minimizing the $\cl{H}-$divergence involves building a representation where examples from the multiple domains are as indistinguishable as possible \citep{ben2007analysis, li2018deep, sebag2019multi}. \cite{ganin2016domain} use this idea to propose an adversarial framework for domain adaptation involving building a representation which achieves maximum error on a domain classifier and minimum error on an outcome predictor. Similarly, in our case, we use domain adversarial training  to build a representation of the patient history $\Phi(\hist{H}_t)$ that is both invariant to the treatment given at timestep $t$, $\vc{A}_t$ and that achieves low error in estimating the outcome $\vc{Y}_{t+1}$.

    \begin{wrapfigure}{r}{0.4\columnwidth}
		\vspace{-0.4cm}
		\begin{center}
			\includegraphics[width=0.4\columnwidth]{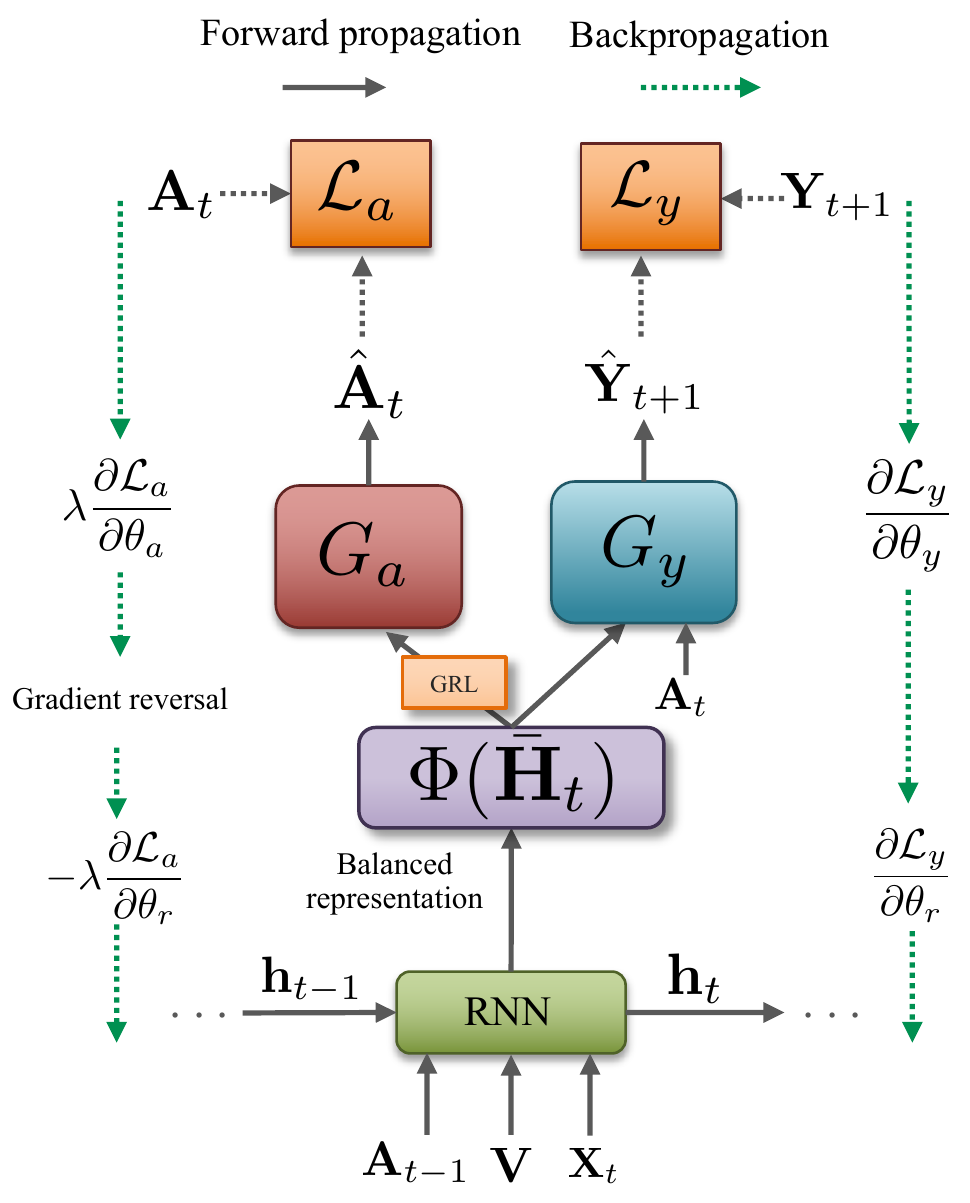}
		\end{center}
		\caption{Training procedure for building balancing representation.} 
		\vspace{-2.5cm} 
		\label{fig:adverasial_balanced_rep}
	\end{wrapfigure}	
 
	 Let $G_a(\Phi(\hist{H}_t); \theta_a)$ be the treatment classifier with parameters $\theta_a$ and let $G^j_a(\Phi(\hist{H}_t); \theta_a)$ be the output corresponding to treatment $A_j$. Let $G_y(\Phi(\hist{H}_t); \theta_y)$ be the predictor network with parameters $\theta_y$. The representation function $\Phi$ is parameterized by the parameters $\theta_r$ in the RNN: $\Phi(\hist{H}_t; \theta_r)$. Figure \ref{fig:adverasial_balanced_rep} shows the adversarial training procedure used. 
	 
	 For timestep $t$ and patient $(i)$, let $ \mathcal{L}^{(i)}_{t, a} (\theta_r, \theta_a)$ be the treatment (domain) loss and let $ \mathcal{L}^{(i)}_{t, y} (\theta_r, \theta_y)$ the outcome loss, defined as follows:
	 \begin{align}
	 \mathcal{L}^{(i)}_{t, a} (\theta_r, \theta_a) &=  - \sum_{j=1}^{K} \mathbb{I}_{\{\vc{a}^{(i)}_{t} = a_j\}} \log(G^j_a(\Phi(\hist{H}_t; \theta_r); \theta_a)) \label{eq:domain_loss} \\
	 \mathcal{L}^{(i)}_{t, y} (\theta_r, \theta_y) &=   \lVert \vc{Y}^{(i)}_{t+1} -  (G_y(\Phi(\hist{H}_t; \theta_r), \theta_y)) \rVert^2.
	 \end{align}
	 
	 If the outcome is binary, the cross-entropy loss can be used instead for $ \mathcal{L}_{t, y} $. To build treatment invariant representations and to also estimate patient outcomes, we aim to maximize treatment loss and minimize outcome loss. 
	 
	 Thus, the overall loss $\mathcal{L}^{(i)}_{t, y}$ at timestep $t$ is given by: 
		 \begin{equation}
		 \mathcal{L}^{(i)}_t(\theta_r, \theta_y, \theta_a) = \sum_{i=1}^{N} \mathcal{L}_{t, y}^{(i)} (\theta_r, \theta_y) - \lambda \mathcal{L}_{t, a}^{(i)} (\theta_r, \theta_a), 
		 \end{equation}
	where the hyperparameter $\lambda$ controls this trade-off between domain discrimination and outcome prediction.  We use the standard procedure for training domain adversarial networks from \cite{ganin2016domain} and we start off with an initial value for $\lambda$ and use an exponentially increasing schedule during training. To train the model using backpropagation, we use the Gradient Reversal Layer (GRL) \citep{ganin2016domain}. For more details about the training procedure, see Appendix \ref{apx:training_procedure}.

	By using the objective $\mathcal{L}^{(i)}_t(\theta_r, \theta_y, \theta_a)$, we reach the saddle point $(\hat{\theta}_{r},  \hat{\theta}_y, \hat{\theta}_a) $ that achieves the equilibrium between domain discrimination and outcome estimation. 
	 \begin{align}
	 (\hat{\theta}_r, \hat{\theta}_y) = \arg \min_{\theta_r, \theta_y} \mathcal{L}^{(i)}_t(\theta_r, \theta_y, \hat{\theta}_a) &&  \hat{\theta}_a = \arg \max_{\theta_a} \mathcal{L}^{(i)}_t(\hat{\theta}_r, \hat{\theta}_y, \theta_a).
	 \end{align}

	 The result stated in Theorem 1 proves that the treatment (domain) loss part of our objective (from equation \ref{eq:domain_loss}) aims to  remove the time-dependent confounding bias. 
	 
	 \begin{theorem}
	 Let $t \in \{1, 2, \dots\}$. For each $j = 1, ..., K$, let $P_j$ denote the distribution of $\bar{\vc{H}}_t$ conditional on $\vc{A}_t = A_j$ and let $P^{\Phi}_j$ denote the distribution of $\Phi(\bar{\vc{H}}_t)$ conditional on $\vc{A}_t = A_j$. Let $G_a^j$ denote the output of $G_a$ corresponding to treatment $A_j$. Then the minimax game defined by
	 \begin{align} \label{eq:minimax}
	     \min_{\Phi} \max_{G_a} \sum_{j = 1}^K &\mathbb{E}_{\bar{\vc{H}}_t \sim P_j} \Big[\log(G_a^j(\Phi(\bar{\vc{H}}_t); \theta_a))\Big] &
	     \text{subject to } \sum_{j = 1}^K &G_a^j(\Phi(\bar{\vc{H}}_t)) = 1
	 \end{align}
	 has a global minimum which is attained if and only if $P^{\Phi}_1 = P^{\Phi}_2 = ... = P^{\Phi}_K$, i.e. when the learned representations are invariant across all treatments. 
	 \end{theorem}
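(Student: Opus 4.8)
The plan is to follow the template of the original generative--adversarial minimax analysis, adapted to the $K$-way (multi-treatment) setting. The key structural observation is that the objective in (\ref{eq:minimax}) depends on $\Phi$ only through the pushforward densities $p_j^\Phi$ of $P^{\Phi}_j$ on the representation space $\cl{R}$, and that $G_a$ acts on $\Phi(\bar{\vc{H}}_t)$ alone. So I would first hold $\Phi$ fixed and solve the inner maximization over $G_a$, then substitute the optimal adversary back in and minimize the resulting expression over $\Phi$.

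First I would rewrite the objective as an integral over $\cl{R}$: writing $r = \Phi(\bar{\vc{H}}_t)$ and pushing each expectation forward through $\Phi$,
\begin{equation}
V(\Phi, G_a) = \sum_{j=1}^K \int_{\cl{R}} p_j^\Phi(r)\, \log\big(G_a^j(r)\big)\, dr.
\end{equation}
Because the only coupling constraint is $\sum_{j=1}^K G_a^j(r) = 1$ at each fixed $r$, the maximization decouples pointwise. For fixed $r$ this reduces to maximizing $\sum_j a_j \log g_j$ over the simplex $\sum_j g_j = 1$ with weights $a_j = p_j^\Phi(r)$, and a one-line Lagrange-multiplier computation gives the maximizer
\begin{equation}
G_a^{j\star}(r) = \frac{p_j^\Phi(r)}{\sum_{k=1}^K p_k^\Phi(r)}.
\end{equation}

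Substituting $G_a^{j\star}$ back yields a functional of $\Phi$ alone. Writing $\bar p(r) = \frac{1}{K}\sum_k p_k^\Phi(r)$ for the mixture density and using $\int p_j^\Phi = 1$ for each $j$, I would manipulate the integrand into
\begin{equation}
V(\Phi, G_a^\star) = \sum_{j=1}^K \KL\big(p_j^\Phi \,\big\|\, \bar p\big) - K\log K = K\cdot \mathrm{JSD}(p_1^\Phi, \dots, p_K^\Phi) - K \log K,
\end{equation}
i.e. the uniformly weighted generalized Jensen--Shannon divergence up to an additive constant. Non-negativity of each $\KL$ term (Gibbs' inequality), together with the fact that a sum of non-negative terms vanishes iff each vanishes, shows $V(\Phi, G_a^\star) \ge -K\log K$ with equality iff $p_j^\Phi = \bar p$ for every $j$, which is precisely the condition $P^{\Phi}_1 = \dots = P^{\Phi}_K$. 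This delivers simultaneously the value of the global minimum and the claimed characterization of when it is attained.

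The main obstacle is making the pointwise inner optimization rigorous rather than merely formal. I must invoke the excerpt's assumption that the hypothesis class $\cl{H}$ containing $G_a$ is expressive enough for the unconstrained optimal soft-classifier $G_a^{j\star}$ to be attainable, and I must assume the pushforward measures $P^{\Phi}_j$ admit densities with respect to a common dominating measure so that the change of variables and the $\KL$ expressions are well defined. Granting these regularity conditions, the remaining steps — the Lagrange-multiplier calculation and the standard divergence identity — are routine and not the crux.
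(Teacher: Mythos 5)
Your proposal is correct and follows essentially the same route as the paper's own proof: fix $\Phi$, solve the inner maximization pointwise via Lagrange multipliers to get $G_a^{j\star}(r) = p_j^\Phi(r)/\sum_k p_k^\Phi(r)$, substitute back, and identify the resulting objective (up to the additive constant $K\log K$) with the multi-distribution Jensen--Shannon divergence, which vanishes iff $P^\Phi_1 = \dots = P^\Phi_K$. Your added remarks on regularity (densities with respect to a common dominating measure, expressiveness of the classifier class) are assumptions the paper leaves implicit, but they do not change the argument.
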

	 \begin{proof}
	 This result is a restatement of the one in \citet{li2018deep}. For details, see the Appendix \ref{apx:proof_theorem_1}.
	 \end{proof}

	 A good representation allows us to obtain a low error in estimating counterfactuals for all treatments, while at the same time to minimize the $\mathcal{H}$-divergence between induced marginal distributions of all the domains. We use an algorithm that directly minimizes a combination of the $\mathcal{H}-$divergence and the empirical training margin.

\section{Experiments}
	
	In real datasets, counterfactual outcomes and the degree of time-dependent confounding are not known \citep{schulam2017reliable, lim2018forecasting}. To validate the CRN\footnote{The implementation of the model can be found at \url{https://bitbucket.org/mvdschaar/mlforhealthlabpub/src/master/alg/counterfactual_recurrent_network/} and at \url{https://github.com/ioanabica/Counterfactual-Recurrent-Network}.}, we evaluate it on a  Pharmacokinetic-Pharmacodynamic model of tumour growth \citep{geng2017prediction}, which uses a state-of-the-art bio-mathematical model to simulate the combined effects of chemotherapy and radiotherapy in lung cancer patients.  The same model was used by \citet{lim2018forecasting} to evaluate RMSNs.

 	\textbf{Model of tumour growth} The volume of tumour $t$ days after diagnosis is modelled as follows:
	\begin{equation}
	\begin{aligned}
	V(t+1) =  \Big( 1 + \underbrace{\rho \text{log}\big(\dfrac{K}{V(t)}\big)}_{\text{Tumor growth}} - \underbrace{\beta_c C(t)}_{\text{Chemotherapy}} - \underbrace{\left(\alpha_r d(t) + \beta_r d(t)^2\right)}_{\text{Radiotherapy}} + \underbrace{e_t}_{\text{Noise}}  \Big)V(t) 
	\end{aligned}
	\end{equation}
	where $K, \rho, \beta_c, \alpha_r, \beta_r, e_t$ are sampled as described in \cite{geng2017prediction}. To incorporate heterogeneity in patient responses, the prior means for $\beta_c$ and  $\alpha_r$ are adjusted to create patient subgroups, which are used as baseline features. The chemotherapy  concentration $C(t)$ and radiotherapy dose $d(t)$ are modelled as described in Appendix \ref{apx:model_of_tumour_growth}. Time-varying confounding is introduced by modelling  chemotherapy and radiotherapy assignment as Bernoulli random variables, with probabilities $p_c$ and $p_r$ depending on the tumour diameter: $p_c(t) = \sigma \big(\frac{\gamma_c}{D_{\max}} (\bar{D}(t) - \delta_c )  \big)$ and $p_r(t) = \sigma \big(\frac{\gamma_r}{D_{\max}} (\bar{D}(t) - \delta_r ) \big)$
	where $\bar{D}(t)$ is the average diameter over the last $15$ days, $D_{\max} = 13 \text{cm}$, $\sigma(\cdot)$ is the sigmoid and $\delta_c = \delta_r = D_{\max}/2$.   The amount of time-dependent confounding is controlled through $\gamma_c, \gamma_r$; the higher $\gamma_{\star}$ is, the more important the history is in assigning treatments. At each timestep, there are four treatment options: no treatment,  chemotherapy,  radiotherapy, combined chemotherapy and radiotherapy. For details about data simulation, see Appendix \ref{apx:model_of_tumour_growth}.

    \textbf{Benchmarks} We used the following benchmarks for performance comparison: Marginal Structural Models (MSMs) \citep{robins2000marginal}, which use logistic regression for estimating the IPTWs and linear regression for prediction (see Appendix \ref{apx:marginal_structural_models} for  details). We also compare against the Recurrent Marginal Structural Networks (RMSNs) \cite{lim2018forecasting}, which is the current state-of-the-art model in estimating treatment responses. RMSNs use RNNs to estimate the IPTWs and the patient outcomes (details in Appendix \ref{apx:recurrent_marginal_structural_networks}). To show that standard supervised learning models do not handle the time-varying confounders we compare against an RNN and a linear regression model, which receive as input treatments and covariates to predict the outcome (see Appendix \ref{apx:baseline_RNN} for details). Our model architecture follows the description in Sections \ref{sec:crn} and \ref{sec:adv_bal_rep}, with full training details and hyperparameter optimization in Appendix \ref{apx:hyperparameter_optimization}. To show the importance of adversarial training, we also benchmark against CRN ($\lambda = 0$)  a model with the same architecture, but with $\lambda = 0$, i.e our model architecture without adversarial training.

	\subsection{Evaluate models on counterfactual predictions}
	
	Previous methods focused on evaluating the error only for factual outcomes (observed patient outcomes) \citep{lim2018forecasting}. However, to build decision support systems, we need to evaluate how well the models estimate the counterfactual outcomes, i.e patient outcomes under alternative treatment options. 	The parameters $\gamma_c$ and $\gamma_r$ control the treatment assignment policy, i.e. the degree of time-dependent confounding present in the data. We evaluate the benchmarks under different degrees of time-dependent confounding by setting $\gamma = \gamma_c = \gamma_r$. For each $\gamma$ we simulate a 10000 patients for training, 1000 for validation (hyperparameter tuning) and 1000 for out-of-sample testing. For the patients in the test set, for each time $t$, we also simulate counterfactuals $\vc{Y}_{t+1}$, represented by tumour volume $V(t+1)$, under all possible treatment options. 

     \begin{figure}[t]
        	\vspace{-0.8cm}
        	\centering
        	\subfloat[\textbf{One}-step ahead prediction]{{\includegraphics[height=4.7cm]{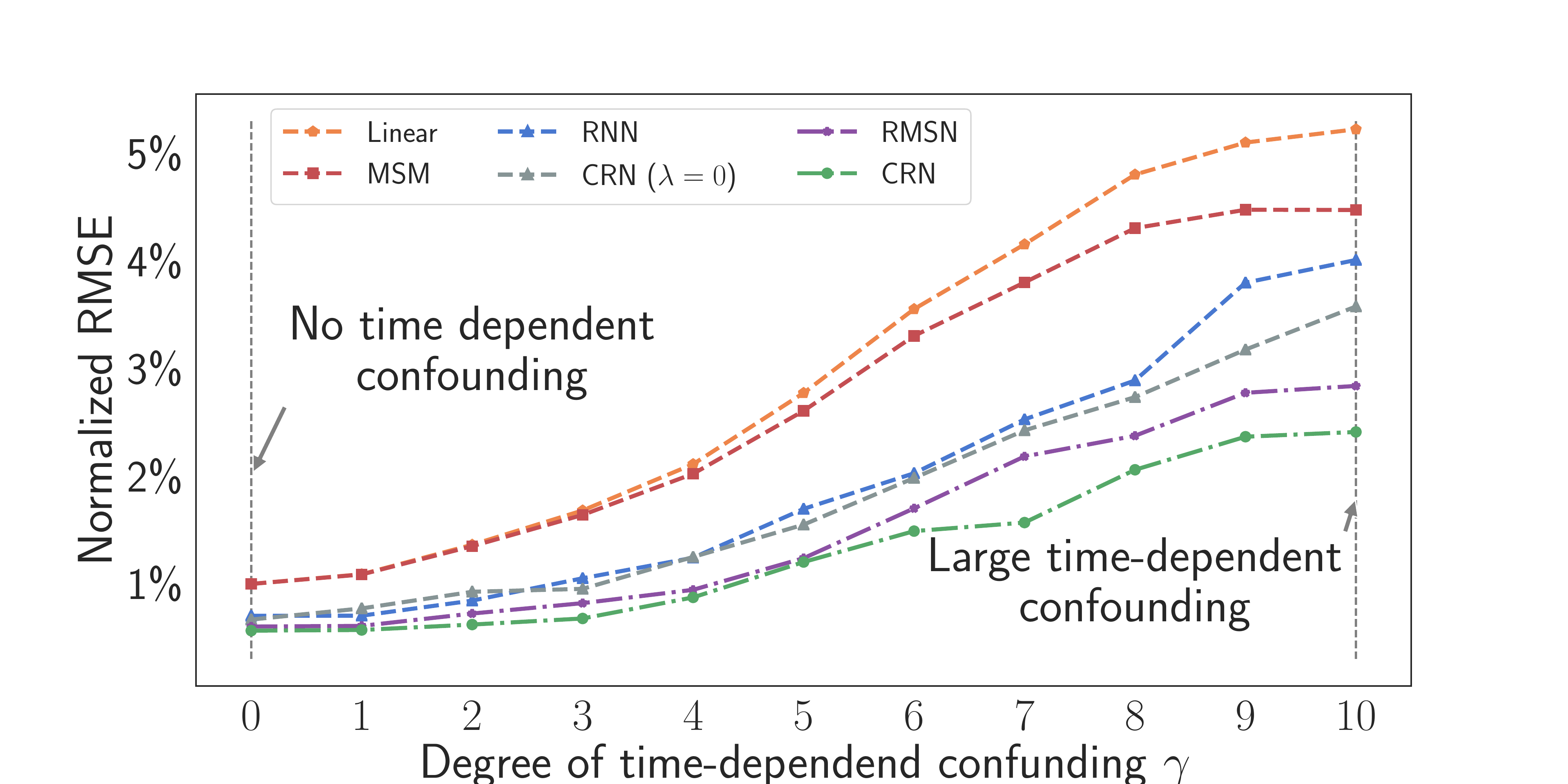} }} 
        	\quad
        	\subfloat[\textbf{Five}-step ahead prediction]{{\includegraphics[height=4.7cm]{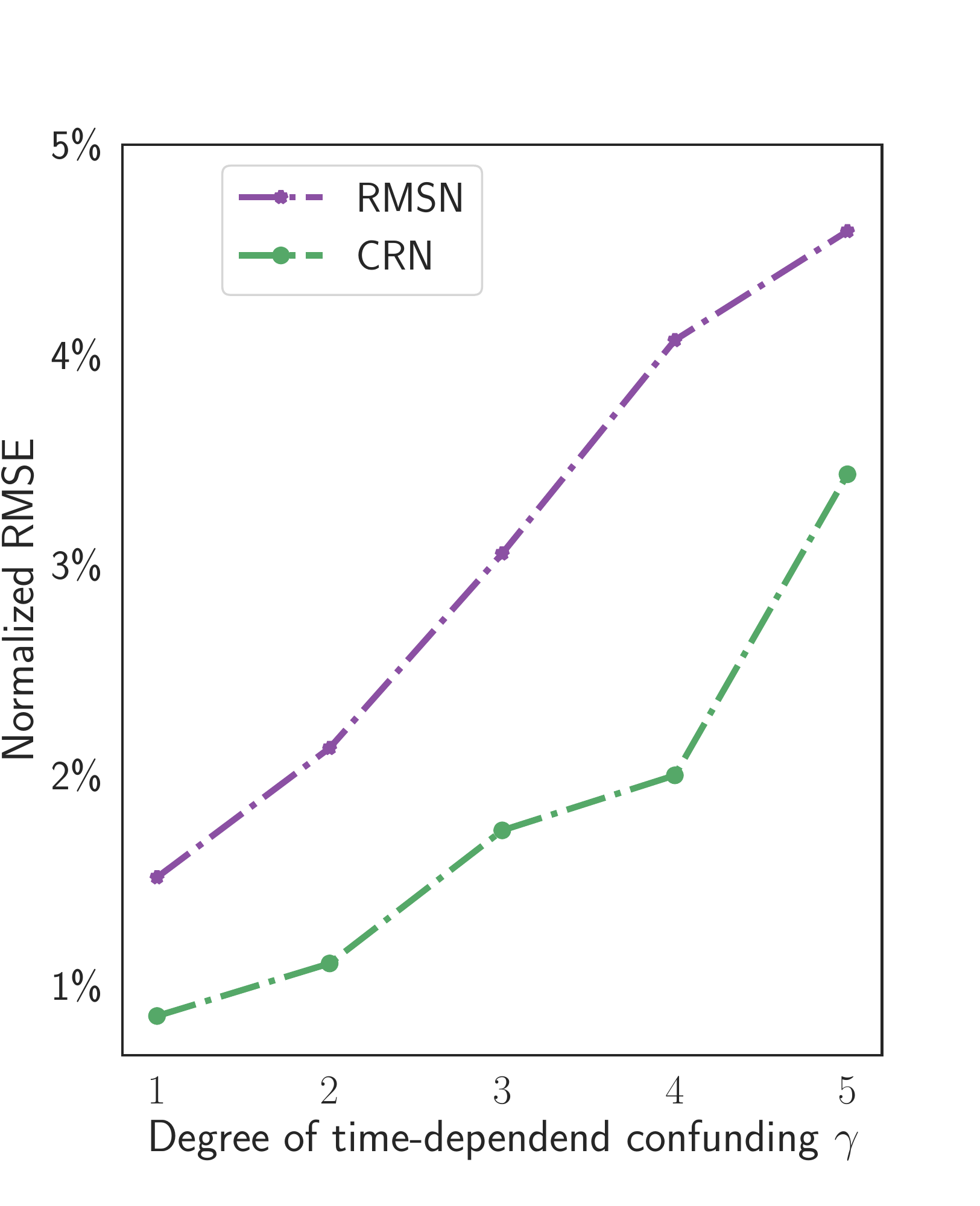} }}%
        	\caption{Results for prediction of patient counterfactuals.}%
        	\label{fig:one_step_ahead}
        	\vspace{-0.3cm}
        \end{figure}

    Figure \ref{fig:one_step_ahead} (a) shows the normalized root mean squared error (RMSE) for one-step ahead estimation of counterfactuals with varying degree of time-dependent confounding $\gamma$. The RMSE is normalized by the maximum tumour volume: $V_{max} = 1150 \text{cm}^3$.  The linear and MSM models provide a baseline for performance as they achieve the highest RMSE. While the use of IPTW in MSMs helps when $\gamma$ increases, using linear modelling has severe limitations. When there is no time-dependent confounding, the machine learning methods achieve similar performance, close to 0.6\% RMSE. As the bias in the dataset increases, the harder it becomes for the RNN and the CRN ($\lambda$ = 0) to generalize to estimate outcomes of treatments not matching the training policy. When $\gamma = 10$, CRN improves by $48.1\%$ on the same model architecture without domain adversarial training  CRN ($\lambda$ = 0).

	Our proposed model achieves the lowest RMSE across all values of $\gamma$. Compared to RMSNs, CRN improves by $\sim17\%$ when $\gamma > 6$. To highlight the gains of our method even for smaller $\gamma$,  Figure \ref{fig:one_step_ahead} (b) shows the RMSE for five-step ahead prediction (with counterfactuals generated as described in Section \ref{sec:timing} and Appendix \ref{apx:test_set_timing}). RMSNs also use a decoder for sequence prediction. However, RMSNs require training additional RNNs to estimate the IPTW, which are used to weight each sample during the decoder training. For $\tau$-step ahead prediction, IPTW involves multiplying $\tau$ weights which can result in high variance. The results in Figure \ref{fig:one_step_ahead} (b) show the problems with using IPTW to handle the time-dependent confounding bias. See Appendix \ref{apx:full_results_counterfactuals} for more results on multi-step ahead prediction. 
	
	\textbf{Balancing representation:} To evaluate whether the CRN has indeed learnt treatment invariant represenations, for $\gamma = 5$, we illustrate in Figure \ref{fig:balancing_rep} the T-SNE embeddings of the balancing representations $\Phi(\hist{H}_t)$ built by the CRN encoder for test patients. We color each point by the treatment $\vc{A}_t \in \{\text{no treatment},  \text{chemotherapy},  \text{radiotherapy}, \text{combined chemotherapy and radiotherapy}\}$ received at timestep $t$ to highlight the invariance of  $\Phi(\hist{H}_t)$ across the different treatments. In Figure \ref{fig:balancing_rep}(b), we show $\Phi(\hist{H}_t)$ only for chemotherapy and radiotherapy for better understanding.
    \vspace{-3.3mm}
    
    \begin{figure}[H]
    	\centering
    	\vspace{-0.3cm}
    	\subfloat{{\includegraphics[height=1.78cm]{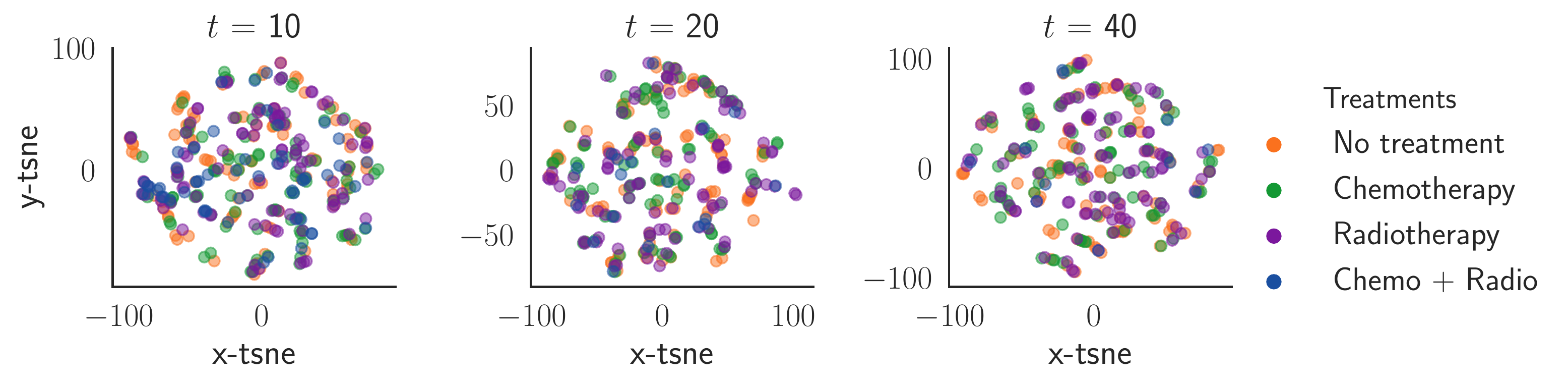} }} 
    	\quad
    	\subfloat{{\includegraphics[height=1.78cm]{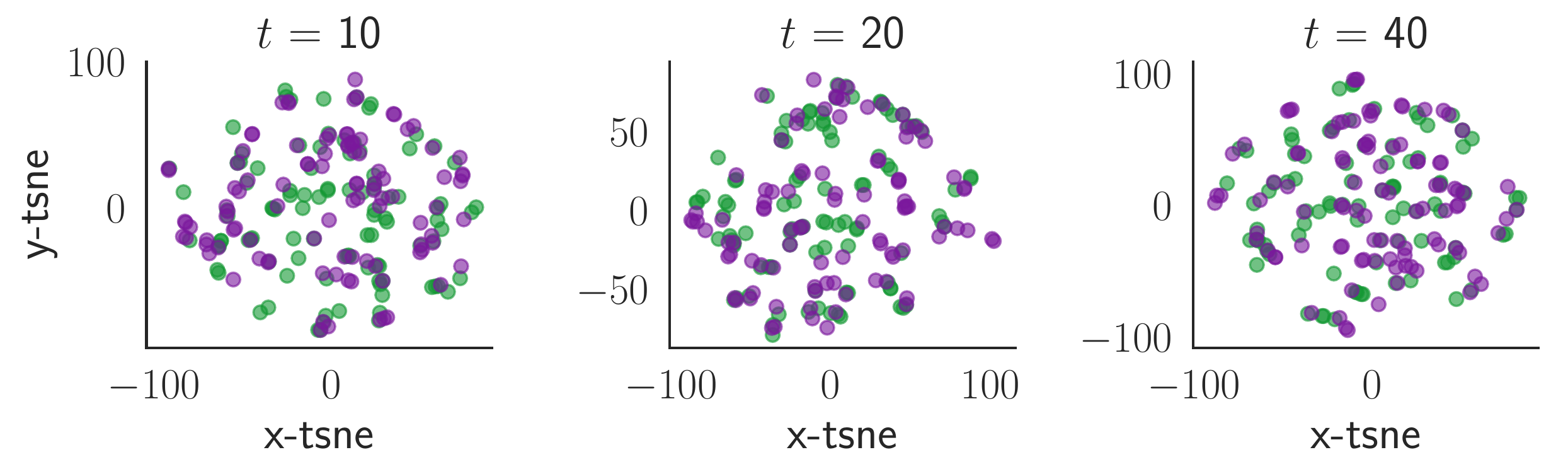} }}%
    	\caption{TSNE embedding of the balancing representation $\Phi(\hist{H}_t)$ learnt by the CRN encoder at different timesteps $t$. Notice that $\Phi(\hist{H}_t)$ is not predictive of the treatment $\mathbf{A}_t$ given at timestep $t$. }
    	\label{fig:balancing_rep}
    	\vspace{-0.3cm}
    \end{figure}

	\subsection{Evaluate recommending the right treatment and timing of treatment} \label{sec:timing}
	
	Evaluating the models just in terms of the RMSE on counterfactual estimation is also not enough for assessing their reliability when used as part of decision support systems. In this section we assess how well the models can select the correct treatment and timing of treatment for several forecasting horizons $\tau$. We generate test sets consisting of 1000 patients where for each horizon $\tau$ and for each time $t$ in a patient's trajectory, there are $\tau$ options for giving chemotherapy at one of $t,\dots t + \tau -1$ and $\tau$ options for giving radiotherapy at one of $t,\dots t + \tau -1$. At the rest of the future timesteps, no treatment is applied. These $2\tau$ treatment plans are assessed in terms of the tumour volume outcome $\vc{Y}_{t+\tau}$. We select the treatment (chemotherapy or radiotherapy) that achieves lowest $\vc{Y}_{t+\tau}$, and within the correct treatment the timing with lowest $\vc{Y}_{t+\tau}$. We also compute the normalized RMSE for predicting $\vc{Y}_{t+\tau}$. See Appendix \ref{apx:test_set_timing} for more details about the test set. The models are evaluated for 3 settings of $\gamma_c$ and $\gamma_r$.

	\begin{table}[t]
		\vspace{-0.3cm}
		\caption{Results for recommending the correct treatment and timing of treatment.}
		\label{tab:results_multi_step_prediction}
		\centering
		\setlength\tabcolsep{3.6pt}
	
		\begin{tabular}{c|c|ccc|ccc|ccc} \toprule
			\multicolumn{2}{c}{} & \multicolumn{3}{|c|}{$\gamma_c = 5$, $\gamma_r = 5$}& \multicolumn{3}{|c|}{$\gamma_c = 5$, $\gamma_r = 0$} & \multicolumn{3}{|c}{$\gamma_c = 0$, $\gamma_r = 5$}  \\ \midrule
			{} & {$\tau$} & {CRN} & {RMSN} & {MSM} & {CRN} & {RMSN} & {MSM} & {CRN} & {RMSN} & {MSM} \\ \midrule
			Normalized     & 3 &\textbf{2.43\%~} & 3.16\%~ & 6.75\%~ &\textbf{ 1.08\%}~ & 1.35\% & 3.68\% & \textbf{1.54\%~} & 1.59\%~ & 3.23\%~  \\
			RMSE			 & 4  & \textbf{2.83\%~} & 3.95\%~ & 7.65\%~ & \textbf{1.21\%}~ & 1.81\%~ & 3.84\%~ & \textbf{1.81\%~} & 2.25\%~ & 3.52\%~ \\
			& 5 &  \textbf{3.18\%~} & 4.37\%~ & 7.95\%~ & \textbf{1.33\%}~ & 2.13\%~ & 3.91\%~ &\textbf{2.03\%~} & 2.71\%~ & 3.63\%~  \\
			& 6 & \textbf{3.51\%~} & 5.61\%~ & 8.19\%~ & \textbf{1.42\%}~ & 2.41\%~ & 3.97\%~ & \textbf{2.23\%~} & 2.73\%~ & 3.71\%~ \\
			& 7 & \textbf{3.93\%~} & 6.21\%~ & 8.52\%~ & \textbf{1.53\%}~ & 2.43\%~ & 4.04\%~ & \textbf{2.43\%~} & 2.88\%~ & 3.79\%~ \\
			\midrule
			Treatment & 3 & \textbf{83.1\%~} & 75.3\%~ & 73.9\%~ & \textbf{83.2\%}~ & 78.6\%~ & 77.1\%~ & \textbf{92.9\%~} & 87.3\%~ & 74.9\%~ \\
			Accuracy  & 4 & \textbf{82.5\%~} & 74.1\%~ & 68.5\%~ &\textbf{81.3\%}~ & 77.7\%~ & 73.9\%~ & \textbf{85.7\%~} & 83.8\%~ & 74.1\%~ \\
			& 5 & \textbf{73.5\%~} & 72.7\%~ & 63.2\%~ & \textbf{78.3\%}~ & 77.2\%~ & 72.3\%~ & \textbf{83.8\%~} & 82.1\%~ & 72.8\%~ \\
			& 6 & \textbf{69.4\%~} & 66.7\%~ & 62.7\%~ & \textbf{79.5\%}~ & 76.3\%~ & 71.8\%~ & \textbf{78.6\%~} & 69.7\%~ & 64.5\%~ \\
			& 7 & \textbf{71.2\%~} & 68.8\%~ & 62.4\%~ & \textbf{72.7\%}~ & 71.8\%~ & 71.6\%~ & \textbf{71.9\%~} & 69.3\%~ & 61.2\%~ \\
			\midrule
			Treatment    & 3 & \textbf{79.6\%~} & 78.1\%~ & 67.6\%~ & \textbf{80.5\%}~ & 76.8\%~ &  77.5\%~ & \textbf{79.8\%~} & 75.7\%~ & 60.6\%~ \\
			Timing  	& 4 & \textbf{73.9\%~} & 70.3\%~ & 63.1\%~ &\textbf{79.0\%}~ & 77.2\%~ & 73.4\%~ & \textbf{75.4\%~} & 71.4\%~ & 58.2\%~ \\
			Accuracy	& 5 & \textbf{69.8\%~} & 68.6\%~ & 62.4\%~ & \textbf{78.3\%}~ & 73.3\%~ & 63.6\%~ & \textbf{66.9\%~} & 31.3\%~ & 29.5\%~ \\
			& 6 & \textbf{66.9\%~} & 66.2\%~ & 62.6\%~ & \textbf{73.5\%}~ & 72.1\%~ & 63.9\%~ & \textbf{65.8\%~} & 24.2\%~ & 15.5\%~ \\
			& 7 & \textbf{64.5\%~} & 63.6\%~ & 62.2\%~ & \textbf{70.6\%}~ & 57.4\%~ & 44.2\%~ & \textbf{63.9\%~} & 25.6\%~ & 12.5\%~ \\
			\bottomrule
		\end{tabular}
	\vspace{-0.3cm}
	\end{table}

	Table \ref{tab:results_multi_step_prediction} shows the results for this evaluation set-up. The treatment accuracy denotes the percentage of patients for which the correct treatment was selected, while the treatment timing accuracy is the percentage for which the correct timing was selected. Note that when $\gamma_c=0$ and $\gamma_r=5$, RMSN and MSM select the wrong treatment timing for projection horizons $\tau > 4$. CRN performs similarly among the different policies present in the observational data and achieve the lowest RMSE and highest accuracy in selecting the correct treatment and timing of treatment.

    In Appendix \ref{apx:mimic} we also show the applicability of the CRN in more complex medical scenarios involving real data. We provide experimental results based on the Medical Information Mart for Intensive Care (MIMIC III) database \citep{johnson2016mimic} consisting of electronic health records from patients in the ICU.

	\section{Conclusion}
	
    Despite its wide applicability, the problem of causal inference for time-dependent treatments has been relatively less studied compared to problem of causal inference in the static setting. Both new methods and theory are necessary to be able to harness the full potential of observational data for learning individualized effects of complex treatment scenarios. Further work in this direction is needed for proposing alternative methods for handling time-dependent confounders, for modelling combinations of treatments assigned over time or for  estimating the individualized effects of time-dependent treatments with associated dosage. 

	In this paper, we introduced the Counterfactual Recurrent Network (CRN), a model that estimates individualized effects of treatments over time using a novel way of handling the bias from time-dependent confounders through adversarial training.
	Using a model of tumour growth, we validated CRN in realistic medical scenarios and we showed improvements over existing state-of-the-art methods. We also showed the applicability of the CRN a real dataset consiting of patient electronic health records. The counterfactual predictions of CRN have the potential to be used as part of clinical decision support systems to address relevant medical challenges involving selecting the best treatments for patients over time, identify optimal treatment timings but also when the treatment is no longer needed. In future work, we will aim to build better  balancing representations and to provide theoretical guarantees for the expected error on the counterfactuals. 

\newpage

\section*{Acknowledgments}
We would like to thank the reviewers for their valuable feedback. The research presented in this paper was supported by The Alan Turing Institute, under the EPSRC grant EP/N510129/1 and by the US Office of Naval Research (ONR).

\bibliography{iclr2020_conference}
\bibliographystyle{iclr2020_conference}

\newpage
\appendix 

\subsection*{Appendix}

\section{Extended related work} \label{apx:related_work}
	
	\textbf{Causal inference in the static setting:} A large number of methods have been proposed to learn treatment effects from observational data in the static setting. In this case, it is needed to adjust for the selection bias; bias caused by the fact that, in the observational dataset, the treatment assignments depend on the patient features. Several ways of handling the selection bias involve using propensity matching \citep{austin2011introduction, imai2014covariate, abadie2016matching}, building representations where treated and un-treated populations had similar distributions \citep{johansson2016learning, shalit2017estimating, li2017matching, yao2018representation} or performing propensity-aware hyperparameter tuning \citep{alaa2017bayesian, alaa2018limits}. However, these methods for the static setting cannot be extended directly to time-varying treatments \citep{hernan2000marginal, schisterman2009overadjustment}.
	
	\textbf{Learning optimal policies:} A related problem to ours involves learning the optimal treatment policies from logged data \citep{swaminathan2015batch, swaminathan2015self, atan2018learning}. That is, learning the treatment option that would give the best reward. Note the difference to the causal inference setting considered in this paper, where the aim is to learn the counterfactual patient outcomes under all possible treatment options. Learning all of the counterfactual outcomes is a harder problem and can also be used for finding the optimal treatment. 
	
	A method for learning optimal policies, proposed by \cite{atan2018learning} uses domain adversarial training to build a representation that is invariant to the following two domains: observational data and simulated randomized clinical trial data, where the treatments have equal probabilities. \cite{atan2018learning} only considers the static setting and aims to choose the optimal treatment instead of estimating all of the counterfactual outcomes. In our paper the aim is to eliminate the bias from the time-dependent confounders and reliably estimate \emph{all of the potential outcomes}; thus, at each timestep $t$ we build a representation that is invariant to the treatment.
	
	\textbf{Off-policy evaluation in reinforcement learning:} In reinforcement learning, a similar problem to ours is off-policy evaluation, which uses retrospective observational data, also known as logged bandit feedback \citep{hoiles2016non, puaduraru2013empirical, doroudi2017importance}. In this case, the retrospective observational data consists of sequences of states, actions and rewards which were generated by an agent operating under an unknown policy. The off-policy evaluation methods aim to use this data to estimate the expected reward of a target policy. These methods use algorithms based on importance sampling \citep{precup2000eligibility, thomas2015high, guo2017using}, action-value function approximation (model based) \citep{hallak2015off} or doubly robust combination of both approaches \citep{jiang2015doubly}. Nevertheless, these methods focus on obtaining average rewards of policies, while in our case the aim is to estimate individualized patient outcomes for future treatments.

	\section{Assumptions} \label{apx:assumptions}
	
	The standard assumptions needed for identifying the treatment effects are \citep{robins2008estimation, lim2018forecasting, schulam2017reliable}: 

	\textbf{Assumption 1: Consistency}.  If $\vc{A}_t = \vc{a}_t$ for a given patient, then the potential outcome for treatment $\vc{a}_t$ is the same as the observed (factual) outcome: $\vc{Y}_{t+1}[\vc{a}_t] = \vc{Y}_{t+1}$. 
		
	\textbf{Assumption 2: Positivity (Overlap)} \citep{imai2004causal}: 
		If $P( \hist{A}_{t-1} = \hist{a}_{t-1}, \hist{X}_t = \hist{x}_t)\neq 0$ then $P(\mathbf{A}_t = \mathbf{a}_t \mid \hist{A}_{t-1} = \hist{a}_{t-1}, \hist{X}_t = \hist{x}_t) > 0$ for all $\hist{a}_t$. 
		
	\textbf{Assumption 3: Sequential strong ignorability.} $\vc{Y}_{t+1}[\vc{a}_t]  \ci \vc{A}_t \mid \hist{A}_{t-1},  \hist{X}_{t}, \forall \vc{a}_t \in \cl{A}, \forall t.$ 
		
	Assumption 2 means that, for each timestep, each treatment has non-zero probability of being assigned. Assumption 3 means that there are no hidden confounders, that is, all of covariates affecting both the treatment assignment and the outcomes are present in the the observational dataset. Note that while assumption 3 is standard across all methods for estimating treatment effects, it is not testable in practice \citep{robins2000marginal,  pearl2009causal}.
	
	\clearpage
	
	\section{Time-dependent Confounding} \label{apx:time_dependent_confounding}

	Figure \ref{fig:time-dependent_confounding} illustrates the causal graphs for a time-varying exposures with 2-steps \citep{robins2000marginal}. In Figure \ref{fig:time-dependent_confounding} (a), the covariate $X$ is a time-dependent confounder because it affects the treatment assignments and at the same time, its value is changed by past treatments \citep{mansournia2017handling}, as illustrated by the red arrows. Thus, the treatment probabilities at each time $t$ depend on the history of covariate $X$ and past treatments. Note that $U_0$ and $U_1$ are hidden variables which only affect the covariates, i.e. they do not have arrows into the treatments. Thus, the no hidden confounders assumption (Assumption 3) is satisfied.  
	
	Figure \ref{fig:time-dependent_confounding} (a) and (b) illustrate the two cases when there is no bias from time-dependent confounding. In Figure \ref{fig:time-dependent_confounding} (a) the treatment probabilities are independent, while in Figure \ref{fig:time-dependent_confounding} (b) they depend on past treatments. 
	
	\begin{figure}[H]
		\centering
		\includegraphics[width=\columnwidth]{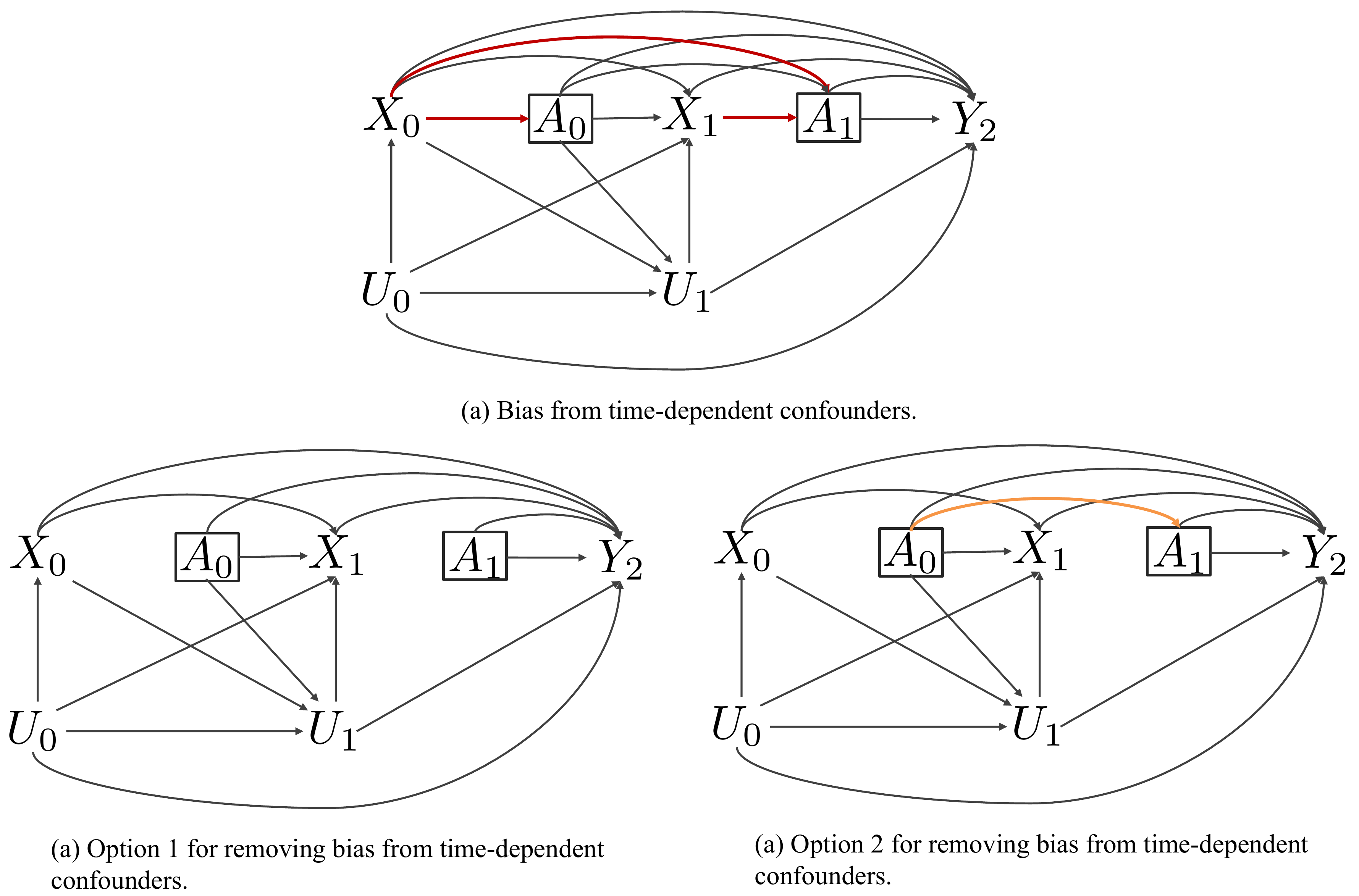}
		\caption{Causal graphs for 2-step time-varying exposures \citep{robins2000marginal}. $X_0, X_1$ are patient covariates, $A_0, A_1$ are treatments, $U_0$, $U_1$ are unobserved variable and $Y_2$ is the outcome.}
		\label{fig:time-dependent_confounding}
	\end{figure}

	\textbf{Marginal Structural Models} \cite{robins2000marginal}.
	To remove the association between time-dependent confounders and time-varying treatments, Marginal Structural Models propose using inverse probability of treatment weighting (IPTW). Without loss of generality, consider the use of MSMs with univariate treatments, baseline variables and outcomes. The outcome after $t$ timesteps is parametrized as follows: $\mathbf{E}[Y_{t+1} \mid \vc{a}_1, \dots \vc{a}_t, V] = g(\vc{a}_1, \dots \vc{a}_n, V; \theta)$, where $g(\cdot)$ is usually a linear function with parameters $\theta$. To remove the bias from the time-dependent confounders present in the observational dataset, in the regression model $g(\cdot)$ MSMs weights each patients using either stabilized weights:
	\begin{equation}
	SW(t) = \prod_{l=1}^{t} \dfrac{f(\vc{A}_l \mid \hist{A}_{l-1})}{ f(\vc{A}_l \mid \hist{X}_{l}, \hist{A}_{l-1}, \vc{V})}
	\end{equation}
	or unstabilized weights: 
	\begin{equation}
	W(t) = \prod_{l=1}^{t} \dfrac{1}{f(\vc{A}_l \mid \hist{X}_{l}, \hist{A}_{l-1}, \vc{V})}, 
	\end{equation}
	where $f(\cdot)$ represents the conditional probability mass function for discrete treatments.

	Inverse probability of treatment weighting (IPTW) creates a pseudo-population where each member consists of themselves and $W-1$ (or $SW-1$) copies added though weighting. In this pseudo-population, Robins \cite{robins1999association} shows that $\hist{X}_t$ does not predict treatment $\vc{A}_t$, thus removing the bias from time-dependent confounders. 
	
	When using unstabilized weights $W$, the causal graph in the pseudo-population is the one in Figure \ref{fig:time-dependent_confounding} (a) where $P(\vc{A}_t \mid \hist{X}_t, \hist{A}_{t-1}, V) = P(\vc{A}_t)$. On the other hand, when using stabilized weights $SW$, causal graph in the pseudo-population is the one in Figure \ref{fig:time-dependent_confounding} (b) where $P(\vc{A}_t \mid \hist{X}_t, \hist{A}_{t-1}, V) = P(\vc{A}_t \mid \hist{A}_{t-1})$.
	
	\textbf{Counterfactual Recurrent Networks}. Instead of using IPTW, we proposed building a representation of $\hist{X}_t, \hist{A}_{t-1}, V$ that is not predictive of treatment $\vc{A}_t$. At timestep $t$, we have $k$ different possible treatments $\vc{A}_t \in \{ A_{1}, \dots A_{K} \}$. We build a representation of the history and covariates and treatments that has the same distribution across the different possible treatments: $P( \Phi(\hist{X}_t, \hist{A}_{t-1}, \vc{V})\mid \vc{A}_t = A_{1}) = \dots = P( \Phi(\hist{X}_t, \hist{A}_{t-1}, \vc{V})\mid \vc{A}_t = A_{K})$. By breaking the association between past exposure and current treatments $\vc{A}_t$, we satisfy the causal graph in Figure \ref{fig:time-dependent_confounding} (a) and thus we remove the bias from time-dependent confounders. 
	
	\newpage
    \section{Proof of Theorem 1} \label{apx:proof_theorem_1}
    We first prove the following proposition.
    \begin{proposition}
    For fixed $\Phi$, let $x' = \Phi(\bar{\vc{h}}_t)$. Then the optimal prediction probabilities of $G_a$ are given by
    \begin{equation}
        {G^j_a}^*(x') = \frac{P^\Phi_j(x')}{\sum_{i=1}^K P^\Phi_i(x')}\,.
    \end{equation}
    \end{proposition}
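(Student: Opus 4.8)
The plan is to follow the classical optimal-discriminator argument generalized from two distributions to $K$ domains. First I would fix $\Phi$ and rewrite the objective in integral form over the representation space $\cl{R}$. Pushing each expectation forward through $\Phi$, the term $\mathbb{E}_{\bar{\vc{h}}_t \sim P_j}\big[\log G_a^j(\Phi(\bar{\vc{h}}_t))\big]$ becomes $\int_{\cl{R}} P^\Phi_j(x') \log G_a^j(x')\, dx'$, so the whole objective equals $\int_{\cl{R}} \sum_{j=1}^K P^\Phi_j(x') \log G_a^j(x')\, dx'$, and the normalization constraint $\sum_{j} G_a^j(x') = 1$ holds at every point $x'$.

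Next, since $G_a$ ranges over essentially arbitrary functions on $\cl{R}$ and both the integrand and the constraint are separable across $x'$, maximizing the integral reduces to maximizing the integrand pointwise. For each fixed $x'$ I would therefore maximize $\sum_{j} a_j \log b_j$ over the simplex $\{b : b_j \geq 0,\ \sum_j b_j = 1\}$, where I set $a_j := P^\Phi_j(x')$ and $b_j := G_a^j(x')$.

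Then I would solve this finite-dimensional constrained problem. One route is a Lagrange multiplier $\nu$ on $\sum_j b_j = 1$, giving the stationarity condition $a_j/b_j = \nu$, hence $b_j = a_j/\nu$; summing and using the constraint fixes $\nu = \sum_i a_i$, so $b_j^\ast = a_j/\sum_i a_i$. A cleaner route is Gibbs' inequality: writing $A = \sum_i a_i$ and the probability vector $p_j = a_j/A$, we have $\sum_j a_j \log b_j = A \sum_j p_j \log b_j \leq A \sum_j p_j \log p_j$, with equality iff $b_j = p_j$; concavity of $\log$ certifies this as the global maximum. Substituting $a_j = P^\Phi_j(x')$ yields ${G^j_a}^\ast(x') = P^\Phi_j(x') / \sum_{i=1}^K P^\Phi_i(x')$, as claimed.

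The \textbf{main obstacle} is justifying the step that lets us optimize pointwise: this relies on the nonparametric assumption that $G_a$ can represent any measurable function on $\cl{R}$ (so the $\max$ commutes with the integral), which is where the argument concerns an idealized classifier rather than the finite-capacity network actually trained. I would also be careful about the null set of points where all $P^\Phi_i(x') = 0$, on which the value of $G_a$ is immaterial and may be assigned arbitrarily subject to normalization, and I would note that existence of the pushforward densities $P^\Phi_j$ is being assumed so that the change of variables is valid.
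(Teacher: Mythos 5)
Your proof is correct and follows essentially the same route as the paper's: rewrite the objective as an integral over the representation space, reduce to a pointwise maximization via the nonparametric assumption on $G_a$, and solve the constrained problem with a Lagrange multiplier to obtain ${G_a^j}^*(x') = P^\Phi_j(x')/\sum_{i=1}^K P^\Phi_i(x')$. Your Gibbs-inequality variant and the remarks on pointwise optimization and null sets are welcome refinements --- in particular, the concavity argument certifies a \emph{global} maximum, which the paper's first-order stationarity condition alone does not --- but they do not change the substance of the argument.
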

    \begin{proof}
    For fixed $\Phi$, the optimal prediction probabilities are given by
    \begin{align}
        G_a^* = \arg \max_{G_a} \sum_{j = 1}^K \int_{x'}
        \log(G_a^j(x')) P^\Phi_j(x')dx' &&
	     \text{subject to } \sum_{j = 1}^K G_a^j(x') = 1 \,.
    \end{align}
    Maximising the value function pointwise and applying Lagrange multiplies, we get
    \begin{equation}
        G_a^* = \arg \max_{G_a} \sum_{j = 1}^K
        \log(G_a^j(x')) P^\Phi_j(x') + \lambda \Bigg(\sum_{j = 1}^K G_a^j(x') - 1\Bigg)\,.
    \end{equation}
    Setting the derivative (w.r.t. ${G_a^j}^*(x')$) to $0$ and solving for ${G_a^j}^*(x')$ we get
    \begin{equation}
        {G_a^j}^*(x') = - \frac{P^\Phi_j(x')}{\lambda}
    \end{equation}
    where $\lambda$ can now be solved for using the constraint to be $\lambda = - \sum_{i=1}^K P^\Phi_i(x')$. This gives the result.
    \end{proof}
    
    \begin{proof} (of \textbf{Theorem 1}) By substituting the expression from Proposition 1 into the minimax game defined in Eq. \ref{eq:minimax}, the objective for $\Phi$ becomes
    \begin{equation}
        \min_{\Phi} \sum_{j = 1}^K \mathbb{E}_{x' \sim P^\Phi_j} \Big[\log \bigg(\frac{P^\Phi_j(x')}{\sum_{i=1}^K P^\Phi_i(x')}\bigg)\Big]\,.
    \end{equation}
    We then note that
    \begin{align}
        \sum_{j = 1}^K \mathbb{E}_{x' \sim P^\Phi_j} \Big[\log \bigg(\frac{P^\Phi_j(x')}{\sum_{i=1}^K P^\Phi_i(x')}\bigg)\Big] + K \log K &= \sum_{j = 1}^K \bigg(\mathbb{E}_{x' \sim P^\Phi_j} \Big[\log \bigg(\frac{P^\Phi_j(x')}{\sum_{i=1}^K P^\Phi_i(x')}\bigg)\Big] + \log K \bigg) \\
        &= \sum_{j = 1}^K \mathbb{E}_{x' \sim P^\Phi_j} \Big[\log \bigg(\frac{P^\Phi_j(x')}{\frac{1}{K}\sum_{i=1}^K P^\Phi_i(x')}\bigg)\Big] \\
        &= \sum_{j=1}^K KL\bigg(P^\Phi_j(x') \bigg| \bigg| \frac{1}{K}\sum_{i=1}^K P^\Phi_i(x') \bigg) \\
        &= K \cdot JSD(P^\Phi_1, ..., P^\Phi_K)
    \end{align}
    where $KL(\cdot||\cdot)$ is the Kullback-Leibler divergence and $JSD(\cdot, ..., \cdot)$ is the multi-distribution Jensen-Shannon Divergence \citep{li2018deep}. Since $K \log K$ is a constant and the multi-distribution JSD is non-negative and $0$ if and only if all distributions are equal, we have that $P^\Phi_1 = ... = P^\Phi_K$.
    \end{proof}
 
    \newpage	
	\section{Training procedure for CRN} \label{apx:training_procedure}
	
	Let $\cl{D} = \left\{ \{\vc{x}^{(i)}_{t},  \vc{a}^{(i)}_{t}, \vc{y}_{t+1}^{(i)} \}_{t=1}^{T^{(i)}} \cup \{\vc{v}^{(i)} \}  \right\}_{i=1}^N$ be an observational dataset consisting of information about $N$ independent patients that we use to train CRN. The encoder and decoder networks part of CRN are trained into two separate steps.
	
	To begin with, the encoder is trained to built treatment invariant representations of the patient history and to perform one-step ahead prediction. After the encoder is optimized, we use it to compute the balancing representation $\vc{br}^{(i)}_t$ for each timestep in the trajectory of patient $(i)$. To train the decoder, we modify the training dataset as follows. For each patient $(i)$, we split their trajectory into shorter sequences of the $\tau_{\max}$ timesteps of the form: 
	\begin{equation}\left\{\mathbf{br}_l^{(i)} \cup \{\vc{y}^{(i)}_{l + t},  \vc{a}^{(i)}_{l + t}, \vc{y}_{l + t+1}^{(i)} \}_{t=1}^{\tau_{max}} \cup \vc{v}^{(i)}   \right\},
	\end{equation}
	for $l=1, \dots T^{(i)} - \tau_{\max}$. Thus, each patients contributes with $T^{(i)} - \tau_{\max}$ examples in the dataset for training the decoder. The different sequences obtained for all patents are randomly grouped into minibatches and used for training.  
	
	The pseudocode in Algorithm \ref{alg:pseudo} shows the training procedure used for the encoder and decoder networks part of CRN. The model was implemented in TensorFlow and trained on an NVIDIA Tesla K80 GPU. The Adam optimizer \citep{kingma2014adam} was used for training and both the encoder and the decoder are trained for 100 epochs. 
	
	\newpage
	
	\begin{algorithm}[h!]
		\caption{Pseudo-code for training CRN}\label{alg:pseudo}
		\begin{algorithmic} 
			\STATE \textbf{Input:} Training data: $\cl{D} = \left\{ \{\vc{x}^{(i)}_{t},  \vc{a}^{(i)}_{t}, \vc{y}_{t+1}^{(i)} \}_{t=1}^{T^{(i)}} \cup \vc{v}^{(i)}  \right\}_{i=1}^N$
			\STATE
			\STATE \textbf{(1) Encoder optimization}: parameters $\theta_{E, r},
			 \theta_{E, a}, \theta_{E, y}$.
			\STATE Learning rate: $\mu$
			\FOR{$p = 1, \dots, \text{max epochs}$}
			\STATE $ \lambda_p = \dfrac{2}{1 + \exp(-10\cdot p)} - 1$
			\FOR{Batch $\cl{B} = \left\{ \{\vc{x}^{(i)}_{t},  \vc{a}^{(i)}_{t}, \vc{y}_{t+1}^{(i)} \}_{t=0}^{T^{(i)}} \cup \vc{v}^{(i)}   \right\}_{i=1}^{|\mathcal{B}|}$ in epoch}
			\STATE Compute $\mathcal{L}_{E,a}^{\mathcal{B}}( \theta_{E, r}, \theta_{E, a}) = \frac{1}{|\mathcal{B}|} \sum_{i \in \mathcal{B}} \sum_{t=1}^{T^{(i)}} \mathcal{L}_{t, a}^{(i)}( \theta_{E, r}, \theta_{E, a})$
			
			\STATE Compute $\mathcal{L}_{E,y}^{\mathcal{B}}( \theta_{E, r}, \theta_{E, y}) = \frac{1}{|\mathcal{B}|} \sum_{i \in \mathcal{B}} \sum_{t=1}^{T^{(i)}} \mathcal{L}_{t, y}^{(i)}( \theta_{E, r}, \theta_{E, y})$
			
			\STATE $\theta_{E, r} \leftarrow \theta_{E, r} - \mu \left (\frac{\partial \mathcal{L}_{E, y}^{\mathcal{B}}( \theta_{E, r}, \theta_{E, y})}{\partial \theta_{E, r}} -\lambda_p \frac{\partial \mathcal{L}_{E, a}^{\mathcal{B}}( \theta_{E, r}, \theta_{E, a})}{\partial \theta_{E, r}} \right)$
			\STATE $\theta_{E, y} \leftarrow \theta_{E, y} - \mu \frac{\partial \mathcal{L}_{E, y}^{\mathcal{B}}( \theta_{E, r}, \theta_{E, y})}{\partial\theta_{E, y}}$
			\STATE $\theta_{E, a} \leftarrow \theta_{E, a} - \mu \frac{\partial \mathcal{L}_{E, a}^{\mathcal{B}}( \theta_{E, r}, \theta_{E, a})}{\partial\theta_{E, a}}$
			\ENDFOR
			\ENDFOR
			\STATE
			\STATE \textbf{(2) Compute the encoder balanced representation and use it to initialize the decoder hidden state.}
			\FOR {$i = 1, ..., N$}
			\FOR {$t = 1, \dots, T^{(i)}$}
			\STATE $\mathbf{br}_t^{(i)} =  \text{encoder}(\hist{x}^{(i)}_t, \hist{a}^{(i)}_{t-1}, \vc{v}^{(i)}; \theta_{E, r})$
			\ENDFOR
			\ENDFOR
			
			\STATE
			\STATE \textbf{(3) Split dataset in sequences of $\tau_{\max}$ timesteps}: 
			$$\left\{ \left\{ \mathbf{br}_l^{(i)} \cup \{\vc{y}^{(i)}_{l + t},  \vc{a}^{(i)}_{l + t}, \vc{y}_{l + t+1}^{(i)} \}_{t=1}^{\tau_{max}} \cup \vc{v}^{(i)}   \right\}_{l=1}^{T^{(i)} - \tau_{\max}}\right\}_{i=1}^{N}$$
			
			\STATE
			\STATE \textbf{(4) Optimize decoder}: parameters $\theta_{D, r}, \theta_{D, a}, \theta_{D, y}$
			\STATE Learning rate: $\mu$
			\FOR{p = 1, \dots, max epochs}
			\STATE $ \lambda_p = \dfrac{2}{1 + \exp(-10\cdot p)} - 1$
			\FOR{Batch $\cl{B} = \left\{ \mathbf{br}_l^{(i)} \cup \{\vc{y}^{(i)}_{l + t},  \vc{a}^{(i)}_{l + t}, \vc{y}_{l + t+1}^{(i)} \}_{t=0}^{\tau_{max}} \cup \{\vc{v}^{(i)} \}  \right\}_{i=1}^{|\mathcal{B}|}$ in epoch}
			\STATE Compute $\mathcal{L}_{D,a}^{\mathcal{B}}( \theta_{D, r}, \theta_{D, a}) = \frac{1}{|\mathcal{B}|} \sum_{i \in \mathcal{B}} \sum_{t=1}^{\tau_{max}} \mathcal{L}_{t, a}^{(i)}( \theta_{D, r}, \theta_{D, a})$
			
			\STATE Compute $\mathcal{L}_{D,y}^{\mathcal{B}}( \theta_{D, r}, \theta_{D, y}) = \frac{1}{|\mathcal{B}|} \sum_{i \in \mathcal{B}} \sum_{t=1}^{\tau_{max}} \mathcal{L}_{t, y}^{(i)}( \theta_{D, r}, \theta_{D, y})$
			
			\STATE $\theta_{D, r} \leftarrow \theta_{D, r} - \mu \left (\frac{\partial \mathcal{L}_{D, y}^{\mathcal{B}}( \theta_{D, r}, \theta_{D, y})}{\partial \theta_{D, r}} -\lambda_p \frac{\partial \mathcal{L}_{D, a}^{\mathcal{B}}( \theta_{D, r}, \theta_{D, a})}{\partial \theta_{D, r}} \right)$
			\STATE $\theta_{D, y} \leftarrow \theta_{D, y} - \mu \frac{\partial \mathcal{L}_{D, y}^{\mathcal{B}}( \theta_{D, r}, \theta_{D, y})}{\partial\theta_{D, y}}$
			\STATE $\theta_{D, a} \leftarrow \theta_{D, a} - \mu \frac{\partial \mathcal{L}_{D, a}^{\mathcal{B}}( \theta_{D, r}, \theta_{D, a})}{\partial\theta_{D, a}}$
			\ENDFOR
			\ENDFOR
			\STATE
			\STATE \textbf{Output: }Trained CRN encoder (parameters $\theta_{E, r}, \theta_{E, a}, \theta_{E, y}$) and trained CRN decoder (parameters $\theta_{D, r}, \theta_{D, a}, \theta_{D, y}$. )
		\end{algorithmic} 
	\end{algorithm}

	\newpage
	
	\section{Pharmacokinetic-Pharmacodynamic Model of tumour growth}
	\label{apx:model_of_tumour_growth}
	
	To evaluate the CRN on counterfactual estimation, we need access to the data generation mechanism to build a test set that consists of patient outcomes under all possible treatment options. For this purpose, we use the state-of-the-art pharmacokinetic-pharmacodynamic (PK-PD) model of tumour growth proposed by \cite{geng2017prediction} and also used by \cite{lim2018forecasting} for evaluating RMSMs. The PK-PD model characterizes patients suffering from non-small cell lung cancer and models the evolution of their tumour under the combined effects of chemotherapy and radiotherapy. In addition, the model includes different distributions of tumour sizes based on the cancer stage at diagnosis.
	
	\textbf{Model of tumour growth} The volume of tumour $t$ days after diagnosis is modelled as follows:
	\begin{equation}
	\begin{aligned}
	V(t+1) =  \Big( 1 + \underbrace{\rho \text{log}(\dfrac{K}{V(t)})}_{\text{Tumor growth}} - \underbrace{\beta_c C(t)}_{\text{Chemotherapy}} - \underbrace{\left(\alpha_r d(t) + \beta_r d(t)^2\right)}_{\text{Radiotherapy}} + \underbrace{e_t}_{\text{Noise}}  \Big)V(t) 
	\end{aligned}
	\end{equation}
	where the parameters $K, \rho, \beta_c, \alpha_r, \beta_r$ are sampled from the prior distributions described in \citep{geng2017prediction} and $e_t \sim \mathcal{N}(0, 0.01^2)$ is a noise term that accounts for randomness in the tumour growth. 
	
	To incorporate heterogeneity among patient responses, due to, for instance, gender or genetic factors \cite{bartsch2007genetic}, the prior means for $\beta_c$ and  $\alpha_r$ are adjusted to create three patient subgroups $S^{(i)} \in \{1, 2, 3\}$ as described in \cite{lim2018forecasting}. This way, we incorporate in the model of tumour growth specific characteristics that affect the patient's individualized response to treatments. Thus, the prior mean $\mu_{\beta_c}$ of $\beta_c$ and the prior mean $\mu_{\alpha_r}$ of $\alpha_r$ are augmented as follows.
	\begin{align}
	    \mu^{\prime}_{\beta_c}(i) &= \begin{cases}
               1.1 \mu_{\beta_c}, \text{if } S^{(i)} = 3 \\
              \mu_{\beta_c}, \text{otherwise}
            \end{cases} & 
         \mu^{\prime}_{\alpha_r}(i) &= \begin{cases}
               1.1 \mu_{\alpha_r}, \text{if} S^{(i)} = 1 \\
              \mu_{\alpha_r}, \text{otherwise}
            \end{cases}
	\end{align}
	where $\mu_{\beta_c}$ and $\mu_{\alpha_r}$ are the mean parameters from \cite{geng2017prediction} and $\mu^{\prime}_{\beta_c}(i)$ and $\mu^{\prime}_{\alpha_r}(i)$ are the parameters used in the data simulation. The patient subgroup $S^{(i)} \in \{1, 2, 3\}$ is used as baseline features.
	
	 The chemotherapy drug concentration follows an exponential decay with half life of 1 day:
	 \begin{equation}
	     C(t) = \tilde{C}(t) + C(t-1)/2,
	 \end{equation}
	 where $ \tilde{C}(t) = 5.0 mg/m^3$ of Vinblastine if chemotherapy is given at time $t$. $d(t) = 2.0 Gy$ fractions of radiotherapy if the radiotherapy treatment is applied at timestep $t$. 
	
	Time-varying confounding is introduced by modelling  chemotherapy and radiotherapy assignment as Bernoulli random variables, with probabilities $p_c$ and $p_r$ depending on the tumour diameter: 
	\begin{align}
	p_c(t) = \sigma \left(\frac{\gamma_c}{D_{\max}} (\bar{D}(t) - \delta_c )  \right) && p_r(t) = \sigma \left(\frac{\gamma_r}{D_{\max}} (\bar{D}(t) - \delta_r ) \right),
	\end{align}
	where $\bar{D}(t)$ is the average tumour diameter over the last $15$ days, $D_{\max} = 13 \text{cm}$ is the maximum tumour diameter and $\sigma(\cdot)$ is the sigmoid activation function. The parameters $\delta_c$ and $\delta_r$ are set to $\delta_c = \delta_r = D_{\max}/2$ such that there is $0.5$ probability of receiving treatment when tumour is half of its maximum size. $\gamma_c, \gamma_r$ control the amount of time-dependent confounding; the higher $\gamma_{\star}$ is, the more important the history of tumour diameter is in assigning treatments. Thus, at each timestep, there are four treatment options options: no treatment ($A_1$),  chemotherpy ($A_2$),  radiotherapy ($A_3$), combined chemotherapy and radiotherapy ($A_4$). 

    Since the work most relevant to ours is the one of \cite{lim2018forecasting} we used the same data simulation and same settings for $\gamma = \gamma_c = \gamma_r$ as in their case. When $\gamma = 0$, there is no time-dependent confounding and the treatments are randomly assigned.  By increasing $\gamma$ we increase the influence of the volume size history (encoded in $\bar{D}(t)$) on the treatment probability. For example, assume $\bar{D}(t) = \frac{3D_{max}}{4}$. From equation (7), the probability of chemotherapy in this case is $p_c (t) = \sigma(\frac{\gamma_c}{D_{max}} (\bar{D}(t) - \frac{D_{max}}{2})) = \sigma(0.25\gamma_c)$, where $\sigma(\cdot)$ is the sigmoid function. When $\gamma = 1$, $p_c (t) = 0.56$, when $\gamma = 5$, $p_c (t) = 0.77$ and when $\gamma = 10$, $p_c (t) = 0.92$ in this example. $\gamma$ can be increased further to increase the bias. However, the values used in the experiments evaluate the model on a wide range of settings for the time-dependent confounding bias. 
    
    \newpage

	\section{Marginal Structural Models} \label{apx:marginal_structural_models}
	
	Marginal Structural Models \citep{robins2000marginal, hernan2001marginal} have been widely used in epidemiology and as part of follow up studies. In our case, we would like to estimate the effects of a sequence of treatments in the future given the current patient history: 
	\begin{equation}
	\mathbb{E}(\vc{Y}_{t+\tau} \mid \hist{A}(t, t+\tau -1 ) =  \hist{a}(t, t+\tau -1 ),  \bar{\vc{H}}_{t}) = g(\tau, a(t, t+\tau-1), \hist{H}_t) ,
	\end{equation} 
	where $g$ is a generic function and $\hist{a}(t, t+\tau-1) = [\vc{a}_{t}, \dots \vc{a}_{{t+\tau -1 }}]$ represents a possible sequence of treatments from timestep $t$ just until before the potential outcome $\vc{Y}_{t+\tau}$ is observed. After removing the bias form time-dependent confounders, $\mathbb{E}(\vc{Y}_{t+\tau} \mid \hist{A}(t, t+\tau -1 ) =  \hist{a}(t, t+\tau -1 ),  \bar{\vc{H}}_{t}) = \mathbb{E}(\vc{Y}_{t+\tau}[\hist{a}(t, t+\tau -1 )]$.
	
	Note that for implementing MSMs, we encode the treatments at timestep $t$ in the model of tumour growth as $\vc{A}_t = [A_{t, c}, A_{t, d}]$ to indicate the binary application of chemotherapy and radiotherapy. In order to remove the time-dependent confounding bias and estimate future outcomes, we use the stabilized weights of MSMs to weight each patient in the dataset: 
	\begin{equation}
	SW(t, \tau) = \prod_{n=t}^{t+\tau} \dfrac{f(\vc{A}_n \mid\hist{A}_{n-1})}{f(\vc{A}_n \mid \hist{A}_{n-1}, \hist{X}_{n}, \vc{V})} =  \prod_{n=t}^{t+\tau} \dfrac{\prod_{k\in\{c, d\}} f(A_{n, k} \mid\hist{A}_{n-1})}{ \prod_{k\in\{c, d\}} f(A_{n, k} \mid \hist{A}_{n-1}, \hist{X}_{n}, \vc{V})},
	\end{equation}
	where $f(\cdot)$ represents the conditional probability mass function for discrete treatments.

	We adopt the implementation in \citep{hernan2001marginal, howe2012estimating, lim2018forecasting} for MSMs and use logistic regression for estimating the propensity weights as follows: 
	\begin{equation}
	f(A_{t,k} \mid \hist{A}_{t-1}) = \sigma \Big(\sum_{j=1}^k \omega_{k}  (\sum_{i=1}^{t-1} A_{t,j}) \Big)
	\end{equation}
	\begin{equation}
	f(A_{t,k} \mid  \hist{H}_{t}) = \sigma \Big(\sum_{k\in\{c, d\}} \phi_{k}  (\sum_{i=1}^{t-1} A_{t,k})   
	+ \vc{w}_1 \vc{X}_t + \vc{w}_2 \vc{X}_{t-1} + \vc{w}_3 \vc{V} \Big)
	\end{equation}
	where $\omega_{\star}, \phi_{\star}$ and $\vc{w}_{\star}$ are regression coefficients, $k\in\{c, d\}$ indicates the chemotherapy or radiotherapy treatments and $\sigma(\cdot)$ is the sigmoid function. 
	
	For predicting the outcome, the following regression model is used, where each individual patient is weighted by its propensity score:
	\begin{equation}
	g(\tau, a(t, t+\tau-1),  \hist{H}_t) = \sum_{k\in\{c, d\}} \beta_{k}  (\sum_{n=t}^{t+\tau-1} A_{n,k}) + \vc{l}_1 \vc{X}_t  
	+ \vc{l}_2 \vc{X}_{t-1} +  \vc{l}_3 \vc{V} 
	\end{equation}
	where $\beta_{\star}$ and $\vc{l}_{\star}$ are regression coefficients.
	
	MSMs do not require hyperparameter tuning so we use the patients from both the train and validation sets for training. 
		
	\section {Recurrent Marginal Structural Networks} \label{apx:recurrent_marginal_structural_networks}
	
	MSMs are very sensitive to model mis-specification in computing the propensity weights and estimating the outcomes. Recurrent Marginal Structural Models (RMSNs) \citep{lim2018forecasting} overcome this problem by using recurrent neural networks to estimate the propensity scores and to build the outcome model. RNNs are more robust to changes in the treatment assignment policy. 
	RMSNs were implemented as descried in \cite{lim2018forecasting}\footnote{We used the publicly available implementation from \url{https://github.com/sjblim/rmsn_nips_2018}.}. 
	
	For implementing RMSNs, we also encode the treatments at timestep $t$ in the model of tumour growth as $\vc{A}_t = [A_{t, c}, A_{t, d}]$ to indicate the binary application of chemotherapy and radiotherapy. The propensity weights are estimated using recurrent neural networks as follows:
	\begin{align}
	f(A_{t,k} \mid \hist{A}_{t-1}) = \text{RNN}_{SW_n}(\hist{A}_{t-1}) && f(A_{t,k} \mid  \hist{X}_{t}, \hist{A}_{t-1}) = \text{RNN}_{SW_d}(\hist{A}_{t-1}, \hist{X}_{t}, \vc{V})
	\end{align}
	
	For predicting one-step-ahed outcome, R-MSNs use an encoder network:
	\begin{eqnarray}
	g(1, a(t, t), \hist{H}_t) =  \text{RNN}_{E}(\vc{a}_t, \hist{A}_{t-1}, \hist{X}_{t}, \vc{V}),
	\end{eqnarray}
	where in the loss function, each patient is weighted by their stabilized IPTW. 
	
	For estimating the treatment responses for a sequence of treatments in the future, RMSNs use a decoder network:
	\begin{eqnarray}
	g(\tau, a(t, t + \tau - 1), \hist{H}_t) =  \text{RNN}_{D}(\vc{a}_t, \dots, \vc{a}_{t+\tau-1}, \hist{A}_{t-1}, \hist{X}_{t}, \vc{V}). 
	\end{eqnarray}
	See \cite{lim2018forecasting} for more details about the R-MSNs model architecture and training procedure of the propensity weights, encoder and decoder networks. Tables \ref{tab:hyperparameters_rmsn_propensity_encodeer} and \ref{tab:hyperparameters_rmsn_decoder} show the hyperparameter search ranges used to optimize this model for evaluation in our paper. The hyperparameters were selected in the same way as proposed by \cite{lim2018forecasting}, based on the error on the factual outcomes in the validation dataset. All of the models are trained using Adam optimizer for 100 epochs.  
	
	\begin{table}[ht]
		 	\caption{Hyperparameter search range for propensity networks and encoder (same as in \cite{lim2018forecasting}). C is the size of the input. }
		 	\label{tab:hyperparameters_rmsn_propensity_encodeer}
		 	\vskip 0.15in
		 	\begin{center}
		 		\begin{small}
		 			\begin{tabular}{cc}
		 				\toprule
		 				Hyperparameter & Search range \\
		 				\hline
		 				\hline
		 				Iterations of Hyperparameter Search & 50 \\
		 				Learning rate &    0.01, 0.005, 0.001  \\
		 				Minibatch size &  64, 128, 256  \\
		 				RNN state size &  0.5C, 1C, 2C, 3C, 4C   \\
		 				Dropout rate &  0.1, 0.2, 0.3, 0.4, 0.5 \\
		 				Max Gradient Norm & 0.5, 1.0, 2.0 \\
		 				\bottomrule
		 			\end{tabular}
		 		\end{small}
		 	\end{center}
		 	\vskip -0.1in
		 \end{table}
 
	 	\begin{table}[ht]
	 	\caption{Hyperparameter search range for decoder (same as in \cite{lim2018forecasting}). C is the input size. }
	 	\label{tab:hyperparameters_rmsn_decoder}
	 	\vskip 0.15in
	 	\begin{center}
	 		\begin{small}
	 			\begin{tabular}{cc}
	 				\toprule
	 				Hyperparameter & Search range \\
	 				\hline
	 				\hline
	 				Iterations of Hyperparameter Search & 20 \\
	 				Learning rate &    0.01, 0.001, 0.0001  \\
	 				Minibatch size &  256, 512, 1024  \\
	 				RNN state size &   1C, 2C, 4C, 8C, 16C   \\
	 				Dropout Rate &  0.1, 0.2, 0.3, 0.4, 0.5 \\
	 				Max Gradient Norm & 0.5, 1.0, 2.0, 4.0 \\
	 				\bottomrule
	 			\end{tabular}
	 		\end{small}
	 	\end{center}
	 	\vskip -0.1in
	 \end{table}

	\section{Baseline RNN and linear model} \label{apx:baseline_RNN}
	
	For the baseline linear model, we fit the same regression model used for Marginal Structural Networks, but without using the IPTW. The baseline RNN uses an LSTM unit and, at each timestep, receives as input the current treatment, the patient covariates and the patient static features to perform one-step-ahead prediction. To have a model of similar capacity to the CRN (similar number of parameters), we add a fully connected layer on top of the output of the LSTM unit in order to obtain the outcomes. Table \ref{tab:hyperparameters_baseline_rnn} shows the hyperparameter search range used to optimize this model.
	The hyperparameters were selecting according to the error on the factual outcomes in the validation set. We train the baseline RNN using the Adam optimizer for 100 epochs. 
	
	\begin{table}[ht]
		\caption{Hyperparameter search range for baseline RNN model. C is the size of the input.}
		\label{tab:hyperparameters_baseline_rnn}
		\vskip 0.15in
		\begin{center}
			\begin{small}
				\begin{tabular}{cc}
					\toprule
					Hyperparameter & Search range \\
					\hline
					\hline
					Iterations of Hyperparameter Search & 50 \\
					Learning rate &    0.01, 0.001, 0.0001  \\
					Minibatch size &  64, 128, 256  \\
					RNN hidden units &  0.5C, 1C, 2C, 3C, 4C   \\
					FC hidden units & 0.5C, 1C, 2C, 3C, 4C \\
					RNN dropout probability &  0.1, 0.2, 0.3, 0.4, 0.5 \\
					\bottomrule
				\end{tabular}
			\end{small}
		\end{center}
		\vskip -0.1in
	\end{table}
	
	\newpage
	\section{Hyperparameter optimization for CRN} \label{apx:hyperparameter_optimization}
	
	As described in Appendix C, the dataset for training the decoder are used by splitting the sequences of the patients in the training set. This creates a larger dataset for training (where each patient $(i)$ contributes $T^{(i)} - \tau_{\max}$ times to the dataset) which requires a different hyperparameter search range. Moreover, the balancing representations computed by the encoder are used to initialize the state of the RNN for the decoder. Thus, the decoder RNN size is equal to the size of the balancing representation size of the encoder. Table \ref{tab:hyperparameters_crn_encoder} shows the hyperparameter search ranges for the encoder and decoder networks in CRN. We selected hyperparameters based on the error of the model on the factual outcomes in the validation dataset. All models are trained for 100 epochs.

	In addition, Tables \ref{tab:optimal_hyperparameters_CRN_encoder} and \ref{tab:optimal_hyperparameters_CRN_decoder} illustrate the optimal hyperparameters chosen. 
	
	\begin{table}[ht]
		\caption{Hyperparameter search range for CRN encoder. C is the size of the input and R is the size of the balancing representation.}
		\label{tab:hyperparameters_crn_encoder}
		\vskip 0.15in
		\begin{center}
			\begin{small}
				\begin{tabular}{ccc}
					\toprule
					Hyperparameter & Search range encoder & Search range decoder \\
					\hline 
					\hline
					Iterations of Hyperparameter Search & 50 & 30\\
					Learning rate &    0.01, 0.001, 0.0001  & 0.01, 0.001, 0.0001  \\
					Minibatch size &  64, 128, 256  & 256, 512, 1024 \\
					RNN hidden units &  0.5C, 1C, 2C, 3C, 4C & Balancing representation size of encoder \\
					Balancing representation size &  0.5C, 1C, 2C, 3C, 4C &  0.5C, 1C, 2C, 3C, 4C \\
					FC hidden units & 0.5R, 1R, 2R, 3R, 4R & 0.5R, 1R, 2R, 3R, 4R\\
					RNN dropout probability &  0.1, 0.2, 0.3, 0.4, 0.5 &  0.1, 0.2, 0.3, 0.4, 0.5 \\
					\bottomrule
				\end{tabular}
			\end{small}
		\end{center}
		\vskip -0.1in
	\end{table}

	\begin{table*}[ht]
		\caption{Optimal hyperparameters for the CRN encoder when different degrees of time-dependent confounding are applied in the model of tumour growth. The parameters $\gamma_c$ and $\gamma_r$ measure the degree of time-dependent confounding applied. When $\gamma_c$ and $\gamma_r$ are set to the same value, we denote this with $\gamma_{\star}$. }
		\label{tab:optimal_hyperparameters_CRN_encoder}
		\vskip 0.15in
		\begin{center}
			\begin{small}
				\begin{tabular}{cccccccc}
					\toprule
					 & $\gamma_{\star}=0$ & $\gamma_{\star}=1$ & $\gamma_{\star}=2$  & $\gamma_{\star}=3$ & $\gamma_{\star}=4$ & $\gamma_{\star}=5$ &  \\
					\hline
					\hline
					Learning rate &    0.001  & 0.1 & 0.001 & 0.01 & 0.01 & 0.001 \\
					Minibatch size &  64 & 64 & 64 & 128 & 64 & 128\\
					RNN hidden units & 12 & 18 & 24 & 18 & 24 & 24  \\
					Balancing representation size  & 18 & 18 & 12 & 18 & 6 & 12  \\
					FC hidden units & 18 & 18 & 36 & 54 & 24 & 48  \\
					RNN dropout probability &  0.1 & 0.1 & 0.1 & 0.2   & 0.2 &  0.1\\
					\midrule 
					\midrule 
					 &$\gamma_{\star}=6$ & $\gamma_{\star}=7$ & $\gamma_{\star}=8$  & $\gamma_{\star}=9$ & $\gamma_{\star}=10$ \\
					\hline
					\hline
					Learning rate  & 0.001 & 0.001 & 0.01 & 0.001 & 0.01 &   \\
					Minibatch size  & 64 & 64 & 128 & 128 & 128 & \\
					RNN hidden units & 24 & 18 &  12& 24& 24 &  \\
					Balancing representation size & 12  & 18 & 24 & 18 & 12 & \\
					FC hidden units & 48 & 72 & 12 & 36 & 12 \\
					RNN dropout probability & 0.1 & 0.2 & 0.1 & 0.1 & 0.1 & \\
					\midrule 
					\midrule 
					& \multicolumn{2}{l}{$\gamma_c = 0$, $\gamma_r=5$} & $\gamma_c = 5$, $\gamma_r=0$ \\
					\hline
					\hline
					Learning rate  & 0.01 & & 0.001  \\
					Minibatch size  & 128 & &64\\
					RNN hidden units & 12 & & 12 \\
					Balancing representation size  & 18 & &24\\
					FC hidden units  & 36 & & 96\\
					RNN dropout probability & 0.1 & &0.1\\
					\bottomrule
				\end{tabular}
			\end{small}
		\end{center}
		\vskip -0.1in
	\end{table*}

	\begin{table*}[t]
	\caption{Optimal hyperparameters for the CRN decoder when different degrees of time-dependent confounding are applied in the model of tumour growth. The parameters $\gamma_c$ and $\gamma_r$ measure the degree of time-dependent confounding applied. When $\gamma_c$ and $\gamma_r$ are set to the same value, we denote this with $\gamma_{\star}$ }
	\label{tab:optimal_hyperparameters_CRN_decoder}
	\vskip -0.1in
	\begin{center}
	\begin{small}
		\begin{tabular}{cccccc}
			\toprule
			& $\gamma_{\star}=1$ & $\gamma_{\star}=2$ & $\gamma_{\star}=3$  & $\gamma_{\star}=4$ & $\gamma_{\star}=5$    \\
			\hline
			\hline
			Learning rate &    0.001  & 0.001 & 0.001 & 0.001 & 0.001   \\
			Minibatch size &  1024 & 1024 & 512 & 1024 & 1024 \\
			RNN hidden units & 18 & 12 & 18 & 6 & 12  \\
			Balancing representation size  & 18 & 18 & 6 & 18 & 3  \\
			FC hidden units & 18 & 36 & 18 & 72 & 6  \\
			RNN dropout probability &  0.1 & 0.2 & 0.3 & 0.1 & 0.1 \\
			\midrule 
			\midrule
			& $\gamma_c = 0$, $\gamma_r=5$ & $\gamma_c = 5$, $\gamma_r=0$  \\
			\hline
			\hline
			Learning rate  & 0.01 &0.001  \\
			Minibatch size & 512 & 1024 \\
			RNN hidden units &  18& 24 \\
			Balancing representation size  & 18 & 12 \\
			FC hidden units &36& 24 \\
			RNN dropout probability & 0.1& 0.03\\
			\bottomrule
		\end{tabular}
	\end{small}
	\end{center}
	\vskip -0.1in
	\end{table*}

    \clearpage

	\section{Full results for counterfactual prediction} \label{apx:full_results_counterfactuals}
	
		\subsection{Multi-step ahead prediction of counterfactuals}
		
		Figure \ref{fig:multi_step_ahead} shows the normalized RMSE for multiple step-ahead prediction of counterfactuals. The RMSE is normalized by the maximum tumour volume: $V_{max} = 1150 \text{cm}^3$. The counterfactuals in this case are generated as described in Section 6.3 and Appendix I. We notice that performance gains of CRN compared to RMSN increase with the number of future timesteps for which the counterfactuals are estimated.  
		
			\begin{figure}[h]
				\centering
				\subfloat[\textbf{Two}-step ahead prediction]{{\includegraphics[height=5.5cm]{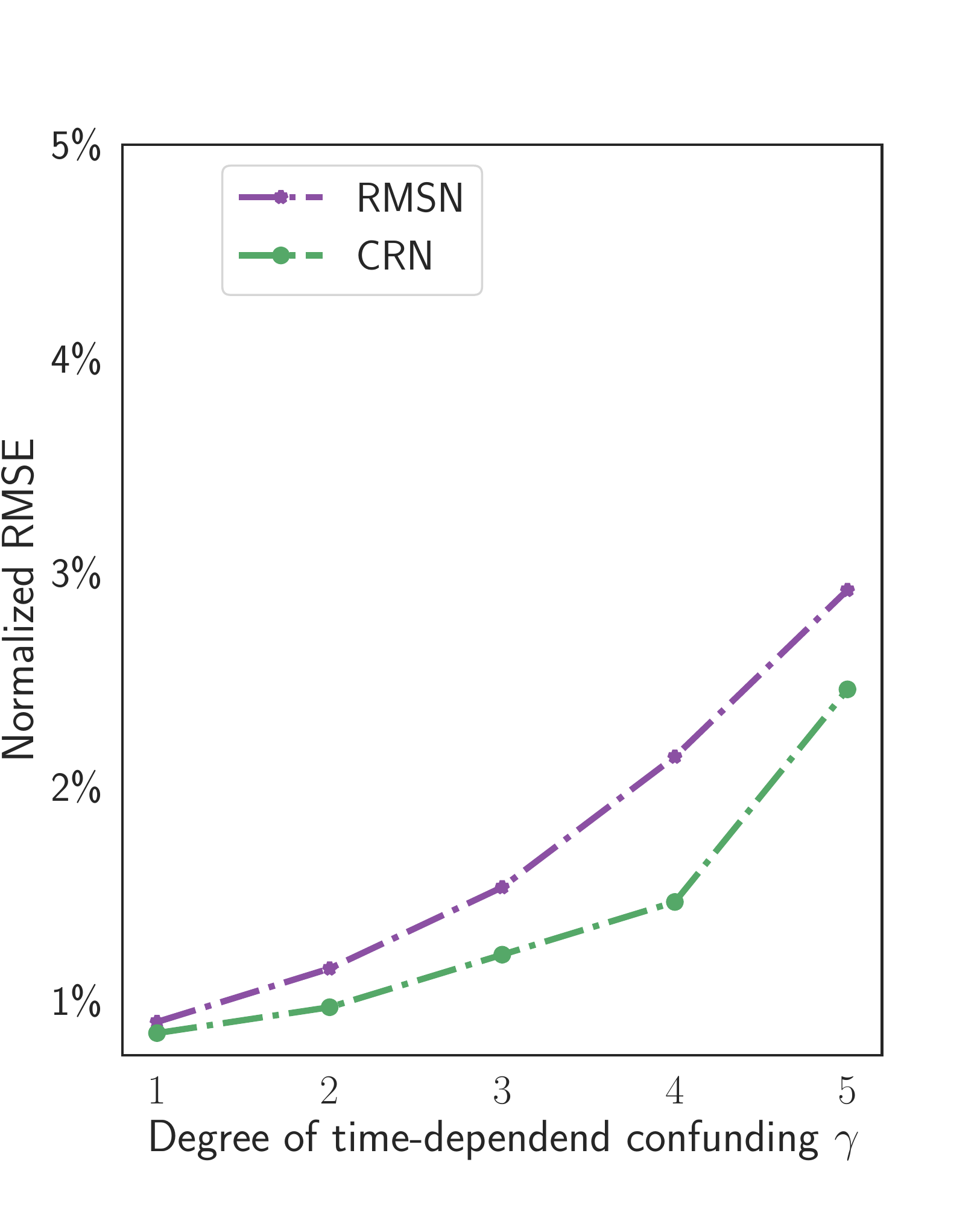} }}%
				\subfloat[\textbf{Three}-step ahead prediction]{{\includegraphics[height=5.5cm]{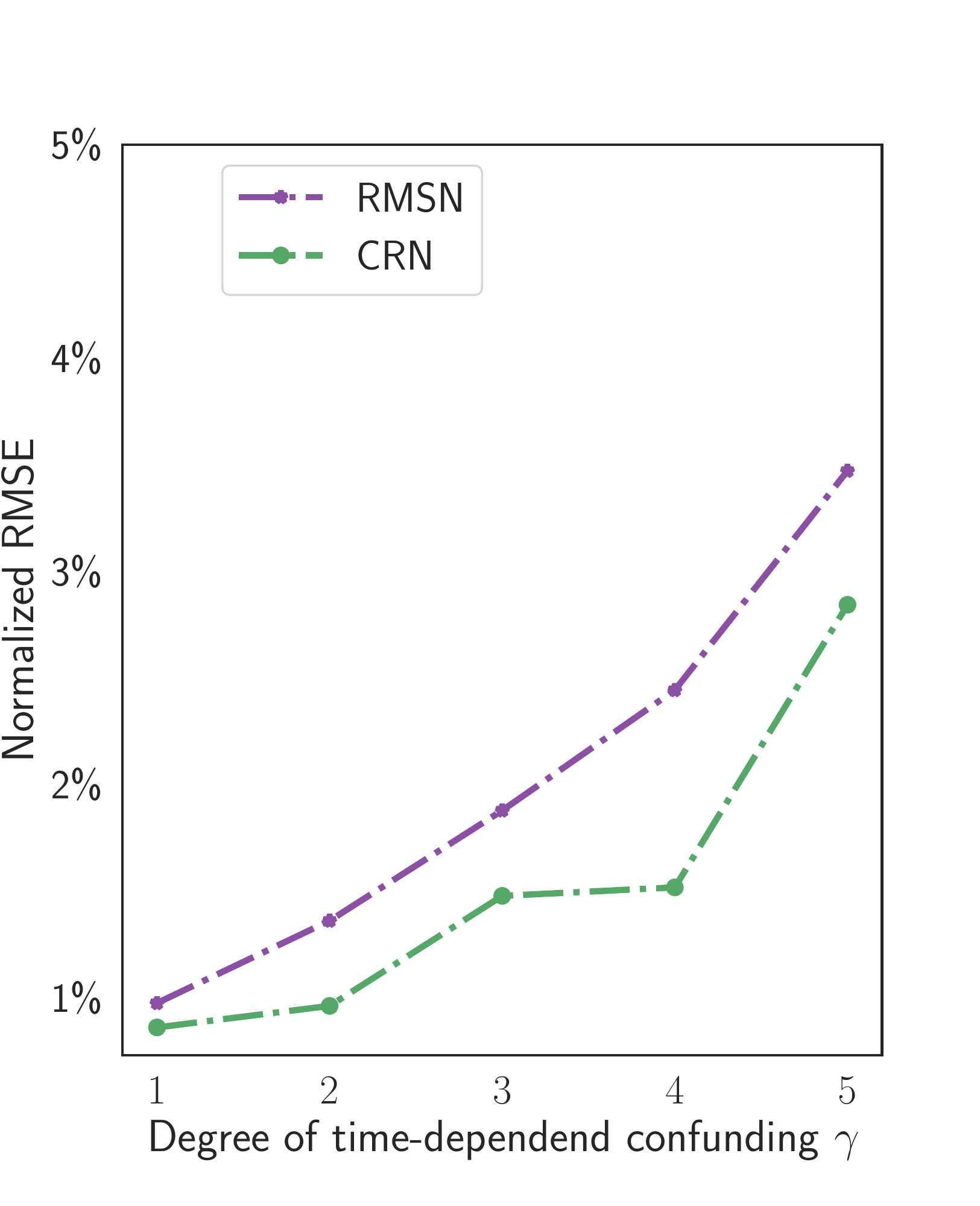} }}%
				\subfloat[\textbf{Four}-step ahead prediction]{{\includegraphics[height=5.5cm]{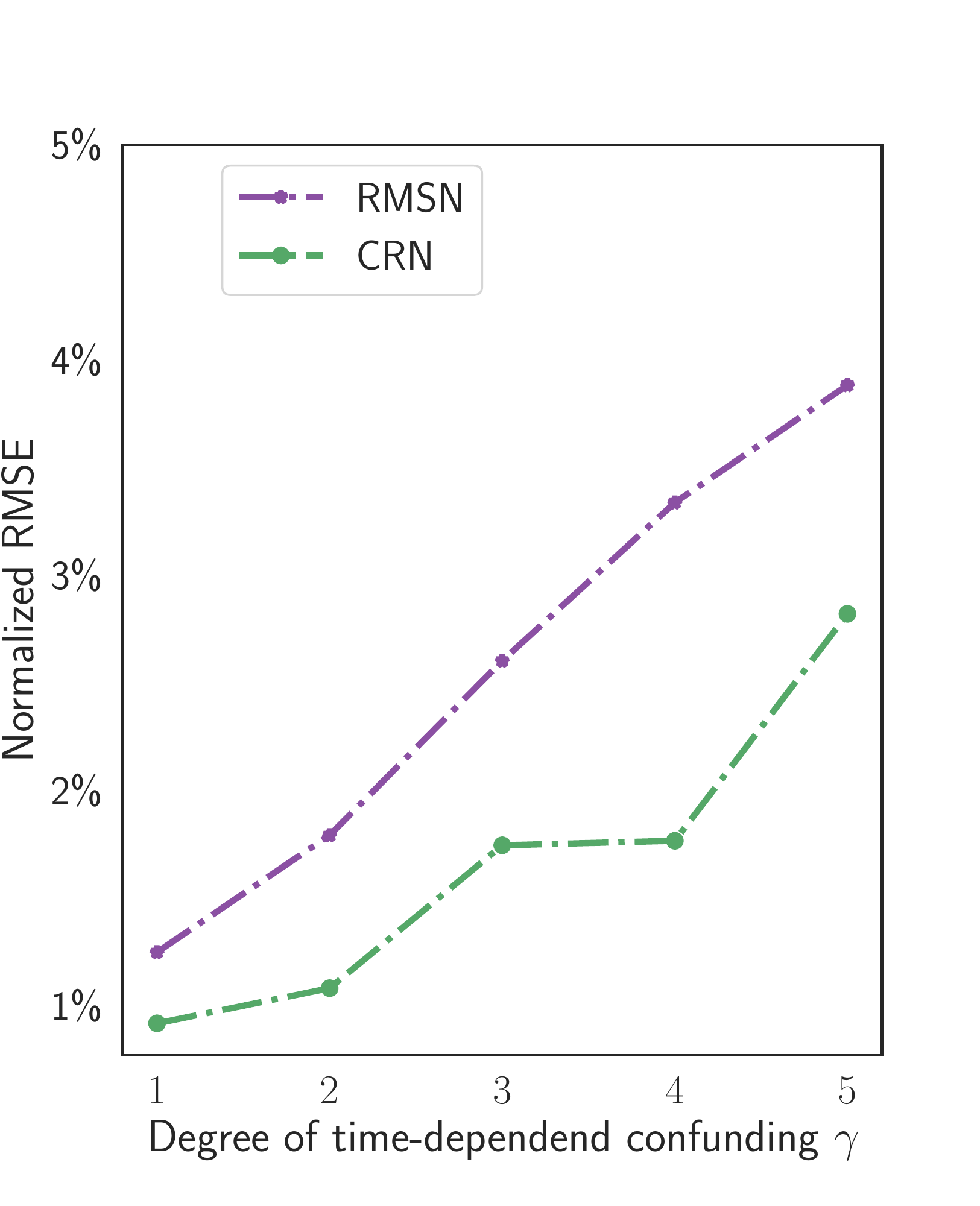} }}%
				
				\subfloat[\textbf{Five}-step ahead prediction]{{\includegraphics[height=5.5cm]{figs/seq_figures/results_5_sequence_prediction} }}%
				\subfloat[\textbf{Six}-step ahead prediction]{{\includegraphics[height=5.5cm]{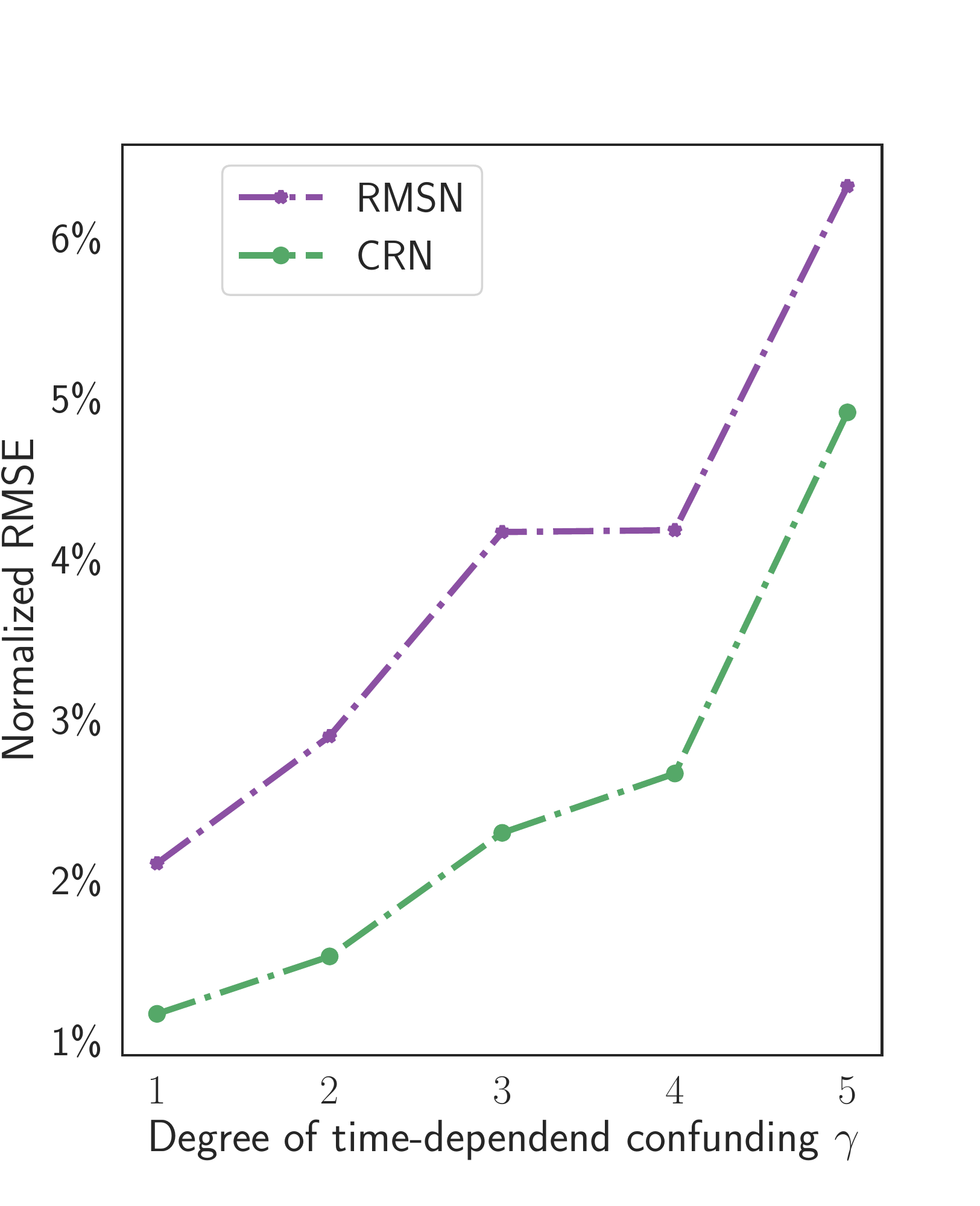} }}%
				\caption{Results for prediction of patient counterfactuals for multiple steps ahead.}%
				\label{fig:multi_step_ahead}
			\end{figure}

		\newpage
		\subsection{Detailed results for the counterfactual predictions}
		Tables \ref{tab:results_one_step_ahead_prediction} and \ref{tab:results_sequence_prediction} show detailed results for the counterfactual predictions. 
	
		\begin{table*}[ht]
		\caption{Normalized RMSE for one-step-ahead prediction of counterfactuals. The parameter $\gamma$ measures the degree of time-dependent confounding applied.  }
		\label{tab:results_one_step_ahead_prediction}
		\vskip 0.15in
		\begin{center}
			\begin{small}
				\begin{tabular}{lcccccc}
					\toprule
					 & $\gamma=0$ & $\gamma=1$ & $\gamma=2$  & $\gamma=3$ & $\gamma=4$ & $\gamma=5$ \\
					\hline
					\hline
					Linear (no IPTW) &  0.99\% & 1.08\% & 1.36\% & 1.68\% & 2.11\% & 2.77\% \\
					MSM &  0.99\%  & 1.08\% & 1.34\% & 1.63\% & 2.02\%& 2.61\%  \\
					\hline
					RNN &  0.70\%  & 0.70\% & 0.84\% & 1.05\% & 1.24\% & 1.69\% \\
					CRN ($\lambda$ = 0)  &  0.66\%  & 0.77\% & 0.92\% & 0.95\% & 1.24\% & 1.54\% \\
					\hline
					RMSN &  0.60\%  & 0.61\% & 0.72\% & 0.81\% & 0.94\% & 1.23\% \\
					CRN &   \textbf{0.56\%}  & \textbf{0.57\%} & \textbf{0.62\%} &\textbf{ 0.67\%} & \textbf{0.87\%} & \textbf{1.20\%} \\
					\midrule 
					\midrule 
					\midrule
					 & $\gamma=6$ & $\gamma=7$ & $\gamma=8$  & $\gamma=9$ & $\gamma=10$ & \\
					\hline
					\hline
					Linear (no IPTW) &  3.55\%  & 4.15\% & 4.80\% & 5.09\% & 5.22\% &\\
					MSM &   3.30\%  & 3.79\% & 4.30\% & 4.47\% & 4.47\% &\\
					\hline
					RNN &  2.03\%  & 2.52\% & 2.88\% & 3.79\% & 4.01\% &\\
					CRN ($\lambda$ = 0)  &  1.98\%  & 2.42\% & 2.73\% & 3.17\% & 3.57\% &\\
					\hline
					RMSN &  1.70\%  & 2.18\% & 2.37\% & 2.77\% & 2.83\% &\\
					CRN &   \textbf{1.48\%}  & \textbf{1.56\%} & \textbf{2.05\%} & \textbf{2.36\%} & \textbf{2.41\%} &\\
					\bottomrule
				\end{tabular}
			\end{small}
		\end{center}
		\vskip -0.1in
	\end{table*}

		\begin{table*}[ht]
		\caption{Normalized RMSE for $\tau$-step-ahead prediction of counterfactuals. The parameter $\gamma$ measures the degree of time-dependent confounding applied.  }
		\label{tab:results_sequence_prediction}
		\vskip 0.15in
		\begin{center}
			\begin{small}
				\begin{tabular}{clccccc}
					\toprule
					& & $\gamma=1$ & $\gamma=2$ & $\gamma=3$  & $\gamma=4$ & $\gamma=5$ \\
					\hline
					\hline
					$\tau=2$& RMSN &  0.90\%  & 1.15\% & 1.53\% & 2.14\% & 2.91\% \\
					& CRN &   \textbf{0.84\%}  & \textbf{0.96\%} &\textbf{ 1.21\%} & \textbf{1.46\%} &\textbf{ 2.45\%}  \\
					\midrule
					$\tau=3$& RMSN &  0.97\%  & 1.36\% & 1.87\% & 2.44\% & 3.47\% \\
					& CRN &   \textbf{0.86\%}  & \textbf{0.96\%} &\textbf{ 1.47\%} & \textbf{1.51\%} &\textbf{ 2.84\%}  \\
					\midrule
					$\tau=4$& RMSN &  1.24\%  & 1.79\% & 2.60\% & 3.33\% & 3.88\% \\
					& CRN &   \textbf{0.91\%}  & \textbf{1.08\%} &\textbf{ 1.74\%} & \textbf{1.76\%} &\textbf{ 2.82\%}  \\
					\midrule
					$\tau=5$& RMSN &  1.51\%  & 2.13\% & 3.06\% & 4.07\% & 4.58\% \\
					& CRN &   \textbf{0.85\%}  & \textbf{1.10\%} &\textbf{ 1.73\%} & \textbf{2.00\%} &\textbf{ 3.43\%}  \\
					\midrule
					$\tau=6$& RMSN &  2.10\%  & 2.89\% & 3.06\% & 4.16\% & 6.32\% \\
					& CRN &   \textbf{1.16\%}  & \textbf{1.52\%} &\textbf{2.29\%} & \textbf{2.66\%} &\textbf{ 4.91\%}  \\
					\bottomrule
				\end{tabular}
			\end{small}
		\end{center}
		\vskip -0.1in
	\end{table*}
	
	\newpage
	
	\section{Test set generation for evaluating timing of treatment} \label{apx:test_set_timing}
	
	In order to evaluate how well the models select the correct treatment and timing of treatment we simulate counterfactual outcomes as follows. We generate 1000 test samples using the model of tumour growth described in Section 6. Let $\hist{H}_t$ be the current history of the patient and let $\tau$ be a future time horizon. For each timestep in the future, we have 4 treatment options at: no treatment ($A_0$), chemotherapy ($A_1$), radiotherapy ($A_2$), chemotherapy and radiotherapy. ($A_3$).  
	
	Using the model of tumour growth where the outcome $\vc{Y}_{t+\tau}$ is given by the volume of the tumour, we generate the following $2\tau$ counterfactuals:
	\begin{eqnarray}
			\text{Chemotherapy application} \nonumber \\
	\vc{Y}_{t+\tau} &\mid& \vc{a}_{t} = A_1, \vc{a}_{t+1} = A_0, \dots \vc{a}_{t+ \tau - 1} = A_0, \hist{H}_t \\
	\vc{Y}_{t+\tau} &\mid& \vc{a}_{t} = A_0, \vc{a}_{t+1} = A_1, \dots \vc{a}_{t+ \tau - 1} = A_0, \hist{H}_t \\
	\dots \nonumber \\
	\vc{Y}_{t+\tau} &\mid& \vc{a}_{t} = A_0, \vc{a}_{t+1} = A_0, \dots \vc{a}_{t+ \tau - 1} = A_1, \hist{H}_t\\
		\text{Radiotherapy application} \nonumber \\
	\vc{Y}_{t+\tau} &\mid& \vc{a}_{t} = A_2, \vc{a}_{t+1} = A_0, \dots \vc{a}_{t+ \tau - 1} = A_0, \hist{H}_t \\
	\vc{Y}_{t+\tau} &\mid& \vc{a}_{t} = A_0, \vc{a}_{t+1} = A_2, \dots \vc{a}_{t+ \tau - 1} = A_0, \hist{H}_t \\
	\dots \nonumber \\
	\vc{Y}_{t+\tau} &\mid& \vc{a}_{t} = A_0, \vc{a}_{t+1} = A_0, \dots \vc{a}_{t+ \tau - 1} = A_2, \hist{H}_t
	\end{eqnarray}
	We perform this for each patient in the test set and at each time $t$ in the history. For instance, for a patient with 50 timesteps in the model of tumour growth and for time horizon $\tau=3$, we generate $2 \cdot 3 \cdot 50 = 300$ counterfactuals. 

	Using the true generated couterfactual data, we select the treatment that has the lowest $\vc{Y}_{t+\tau}$ among the $\tau$ options generated for each treatment. Then, we select the time of applying treatment (among $t, t+1, \dots t+\tau-1$) that resulted in the lowest $\vc{Y}_{t+\tau}$. For each model, we generate the counterfactuals under the same treatment plans and patient histories. Then, we perform the selection of treatment and timing of treatment in the same way and we compare these with the true ones. Note that in order to account for numerical instability (two outcomes $Y_{t+\tau}$ having very similar values), we consider two outcomes the same if they are within $\epsilon = 0.001$ of each other.

\newpage
\section{Results on factual prediction on MIMIC III} \label{apx:mimic}

	Using the Medical Information Mart for Intensive Care (MIMIC III) \citep{johnson2016mimic} database consisting of electronic health records from patients in the ICU, we also show how the CRN can be used on a real medical dataset. From MIMIC III we extracted the patients on antibiotics, with trajectories up to 30 timesteps, thus obtaining a dataset with $3487$ patients. For each patient, we extracted $25$ patient covariates including lab tests and vital signs measured over time, as well as static patient features such as age and gender.

    We used a binary treatment at each timestep indicating whether the patient was administered antibiotics or not. Note that for the longitudinal covariates we used aggregate value for each day since the ICU admission. The reason for this is because antibiotic treatment is decided daily for the patient. We split the dataset into $2826/313/348$ patients for training, validation and testing respectively. We performed hyperparameter optimization on the validation patient set, using the search ranges in Table \ref{tab:hyperparameters_crn_encoder} and we again selected hyperparameters based on the error on the factual outcomes. 
    
    We estimate the individualized effect of antibiotics assigned over time on the patient's white blood cell count. A high white blood cell count is associated with severe illness and poor outcome for ICU patients \citep{waheed2003white}. Antibiotic administration in the ICU aims to reduce the white blood cell count. However, the effectiveness of the antibiotics treatment in reducing the white blood cell count is highly dependent on the time they are administered with respect to the history of the patient covariates. In this context we again have time-dependent confounders: the patient features change over time and are affected by the previous administration of antibiotics. Moreover, the history of the patient features also determines antibiotics administration and affects future patient outcomes \citep{de2018complete, ali2019rational}.
    
    In Table \ref{tab:results_sequence_prediction_mimic} we report the root mean squared error for factual prediction of the patients' white blood cell count for multiple prediction horizons $\tau$. Note that for this dataset we do not have access to counterfactual data, which is why we report error on factual predictions.

    \begin{table*}[ht]
    		\caption{RMSE for $\tau$-step-ahead prediction of factual outcomes on MIMIC III.}
    		\label{tab:results_sequence_prediction_mimic}
    		\vskip 0.05in
    		\begin{center}
    				\begin{tabular}{lcccc}
    					\toprule
    					& $\tau=1$ & $\tau=2$ & $\tau=3$  & $\tau=4$   \\
    					\hline
    					RMSN & $2.84$ & $3.87$ & $4.46$ & $4.79$ \\
    					CRN &  $2.68$ & $3.54$  &  $4.07$  & $4.67$ \\
    					\bottomrule
    				\end{tabular}
    		
    		\end{center}
    		\vskip -0.1in
    \end{table*}

    We notice that CRN also achieves better performance than RMSN in estimating factual outcomes in a real-world dataset containing electronic health records. In this context, where couterfactual data is not available, domain expert knowledge is required to validate the model's counterfactual predictions under other antibiotic treatment alternatives. This further medical validation is outside the scope of this paper.  
   
\end{document}